\documentclass{article}

\def\inclapp{0}
\def\viewchanges{1}
\def\viewauthors{1}
\def\viewkeywords{0}
\def\usehyperlinks{1}
\def\addackn{0}
\def\preprint{1}

\usepackage[dvipsnames]{xcolor}

\if\preprint1
    \usepackage[preprint]{tmlr}
\else
    \if\viewauthors1
        \usepackage[accepted]{tmlr}
    \else
        \usepackage{tmlr}
    \fi
\fi

\newcommand{\myAND}{\\}
\let\myAND\AND
\let\AND\undefined

\usepackage{natbib}

\usepackage[utf8]{inputenc} 
\usepackage[T1]{fontenc}    
\usepackage{booktabs}       
\usepackage{amsfonts}       
\usepackage{nicefrac}       
\usepackage{microtype}      
\usepackage{amsthm}
\usepackage{amsmath}
\usepackage{amssymb}
\usepackage{enumitem}
\usepackage{bbm}
\usepackage{dsfont}
\usepackage{url}            

\usepackage{breakurl}
\if\usehyperlinks1
	\usepackage[colorlinks, citecolor=blue, linkcolor=magenta, breaklinks]{hyperref}       
\fi

\usepackage{graphicx,wrapfig,lipsum}
\usepackage{subcaption}
\usepackage{verbatim}
\usepackage{bm}
\usepackage{adjustbox}
\usepackage{footnote}
\usepackage{algorithm}
\usepackage{algpseudocode}
\usepackage{tikz}
\usepackage{cleveref}
\usepackage{multirow}

\if\inclapp0
	\usepackage{xr}
\fi

\newtheorem{theorem}{Theorem}[section]
\newtheorem{lemma}[theorem]{Lemma}
\newtheorem{definition}[theorem]{Definition}

\newtheorem{rem}[theorem]{Remark}
\newtheorem{cor}[theorem]{Corollary}

\newtheorem{assumption}{Assumption}

\crefname{assumption}{Assumption}{Assumptions}
\Crefname{assumption}{Assumption}{Assumptions}

\let\P\undefined%
\newcommand{\P}{\mathbb{P}}
\newcommand{\F}{\mathcal{F}}

\newcommand{\E}{\mathbb{E}} 
\newcommand{\N}{\mathbb{N}} 
\newcommand{\R}{\mathbb{R}} 
\newcommand{\C}{\mathcal{H}} 
\newcommand{\argmin}{\operatorname{argmin}}

\newcommand{\omb}{(\omega)}

\let\del\undefined
\let\com\undefined

\if\viewchanges1
	
	\newcommand{\del}[1]{{\color{red}{#1}}}
	\newcommand{\com}[1]{{\color{orange}{#1}}}
	
\else
	
	\newcommand{\del}[1]{}
	\newcommand{\com}[1]{}
	
\fi

\allowdisplaybreaks

\title{{Operator Neural Jump ODEs: $L^2$-optimal prediction in function spaces}}

\if\viewauthors1
	\author{%
        \name Florian Krach \email{florian.krach@me.com} \\
        \addr Department of Mathematics\\
        ETH Zurich
        \myAND 
         \name Oliver Löthgren \email{olothgren@hotmail.com} \\
        \addr Department of Mathematics\\
        ETH Zurich
        \myAND 
         \name Josef Teichmann \email{josef.teichmann@math.ethz.ch} \\
        \addr Department of Mathematics\\
        ETH Zurich
	}
\else
	\author{}
\fi

\providecommand{\keywords}[1]{\textbf{{Keywords:}} \textit{#1}}

\begin{document}

\maketitle

\begin{abstract}
In this paper, we study the extension of Neural Jump ODEs \citep[NJ-ODEs;][]{krach2022optimal} to infinite-dimensional function spaces. In particular, the underlying process $X$ now takes values in $L^2(\Xi, \R^{d_X})$ instead of $\R^{d_X}$ and the Operator NJ-ODE approximates the optimal predictor of this process by producing a representative of the conditional expectation.
The NJ-ODE model is a framework for online learning the optimal prediction of continuous-time stochastic processes, given discrete, possibly irregular and incomplete past observations. In a series of works, this model has been extended to deal with generic path-dependent processes, with observation noise and dependent observations, with long-term predictions, and with input-output systems. However, throughout all of these works, the underlying processes were restricted to be finite-dimensional. In particular, function-valued problems, like yield curve or volatility surface predictions, could only be handled through discretization, which inherently leads to a loss of information. In this work, we build on ideas from Neural Operator methods that allow us to extend the NJ-ODE framework to an infinite-dimensional output process.
To prove convergence of the NJ-ODE to the optimal prediction process, we develop a new approximation strategy that also generalizes previous works in the finite-dimensional setting by considerably weakening the assumptions.
\end{abstract}

\if\viewkeywords
	\keywords{forecasting stochastic processes, time series prediction, recurrent neural networks, neural ODEs, universal approximation, conditional expectation, neural operators}
\fi

\section{Introduction}\label{sec:Introduction}

An incredibly diverse range of players participate in financial markets. From everyday retail investors to profit-seeking entities like prop-trading firms, asset managers, and hedge funds, to institutional players like banks, they all operate with distinct objectives. Yet, in their pursuit of profits, they all share a unifying quality: reliance on a ubiquitous task in quantitative finance, namely forecasting. 

The forecasting problem entails finding optimal future predictions given currently available information, and there are two categories of solutions. While offline forecasting makes predictions based on time series in their entirety, online forecasting explicitly considers the sequential nature of the data, processing information as it is observed, and updating labels dynamically in time. The focus of this paper will be on the latter. In particular, we consider the general setting of infinite-dimensional, continuous-time stochastic processes with unknown dynamics and estimate their $L^2$-optimal predictor, i.e.~the conditional expectation given past observations. Modeling this auxiliary stochastic process circumvents the need to learn the dynamics of the underlying process itself, which is a more complicated task. 

Several architectures have been implemented to this end, including latent-space models such as RNNs \citep{rumelhart1985learning,jordan1997serial}, state-space models such as Kalman \citep{kalman1960new} or particle filters \citep{djuric2003particle}, as well as their numerous extensions \citep{archer2015black,karl2016deep,krishnan2015deep,krishnan2017structured,naesseth2018variational,revach2022kalmannet}. These methods estimate either the conditional expectation directly or via the conditional probability distribution. However, they are primarily designed for discrete-time prediction and cannot be applied effectively to partially observed or irregularly sampled data. Moreover, methods such as the Kalman filter make strong distributional assumptions that limit the scope of applicability.

To achieve full generality, a model should therefore be nonparametric, and elevated to continuous-time considering the latent dynamics between observations. A natural extension is to learn the differential equations that govern these dynamics with neural networks. For example, \citet{Kidger2020NeuralCD} develop a Neural CDE that extends the Neural ODE \citep{chen2018neural} as the continuous-time analogue of an RNN. While Neural ODEs depend only on data through initial conditions, the Neural CDE introduces data dependence through time by using interpolated observations as a driver. Meanwhile, the ODE-RNN \citep{ODERNN2019} builds directly on the RNN by incorporating a Neural ODE between observations. A special case of this architecture is the GRU-ODE-Bayes \citep{Brouwer2019GRUODEBayesCM}, which replaces the RNN with a gated recurrent unit (GRU) and the neural ODE with a continuous GRU. The former paper establishes a strong approximation theory, but the latter is merely studied empirically. Moreover, both lack a rigorous mathematical training framework and proofs of consistency. The Neural Jump ODE (NJ-ODE) \citep{herrera2021neural} and its extensions address this. It provides a mathematical problem statement and a compatible objective function that allows for proofs of convergence in the $L^2$-norm. Initially introduced as a modification of the ODE-RNN, with a rigorous mathematical problem setting and a new objective function, it has been extended to account for path dependence \citep{krach2022optimal}, noisy data and a dependent observation framework \citep{NJODE3}, long-term predictions \citep{krach2024learning} and input-output systems \citep{heiss2024nonparametric}. The previous models all assumed continuity of the function (and its generalized derivative) that retrieves the conditional expectation\footnote{Such a function exists by the Doob-Dynkin Lemma but the authors assume existence of its generalized derivative}. In this paper, we show that the continuity of these functions is not a necessary assumption. In fact, the weaker assumption of measurability is sufficient and allows for a wider range of applications. This is achieved by relying on RNNs instead of signatures to capture path-dependence in the theoretical results. 

Despite their flexibility in handling irregular observation frameworks, the aforementioned models remain confined to finite-dimensional settings. This limitation excludes a broad class of applications in quantitative finance and beyond, where objects are inherently function-valued, such as the forecasting of yield curves or implied volatility surfaces. Naturally, the additional degrees of freedom make this a significantly more challenging task. Typical methods for function-valued forecasting include Gaussian Processes \citep{Rasmussen2004} and kernel-based methods \citep{pmlr-v2-weinberger07a, pmlr-v9-kadri10a}, which face poor scaling in calculating kernel matrices and impose restrictive assumptions on the underlying process. Alternatively, one can use a dimension reduction method to project the infinite-dimensional problem to a finite-dimensional one. However, this results in information loss and sacrifices the function-valued forecast. In parallel, the field of operator learning has seen significant growth, especially with the advent of Neural Operators \citep{kovachki2023neural}, originally designed for solving PDEs. These generalize feedforward neural networks by replacing affine transformations with integral kernel operators. From discrete data, the kernels capture spatial information and enable the learning of maps between function spaces, but these maps are static and not suitable for online learning. Motivated by this approach, we introduce the Operator Neural Jump ODE (O-NJ-ODE), which incorporates a generalized kernel operator into the NJ-ODE framework and enables the online learning of function-valued forecasts. Moreover, by adjusting the mathematical problem formulation accordingly, we establish theoretical guarantees by proving convergence to the $L^2$-optimal predictor, i.e. the conditional expectation equivalence class, in the pseudo-metric $d_k$ of Definition \ref{def:indistinguishability}. 

\subsection{Outline of the Paper}
We begin by formulating the problem setting and stating the necessary assumptions in \Cref{sec:Problem Setting}, before introducing the O-NJ-ODE model and training framework in \Cref{sec:proposed method}. Once the architecture is fully established, \Cref{sec:Convergence Guarantees} presents the theoretical guarantees along with their proofs, including convergence of the O-NJ-ODE in training with respect to the proposed objective function, and the practically relevant convergence of the Monte Carlo approximation of this function to its true value.

\section{Problem Setting}\label{sec:Problem Setting}
We assume to have a continuous-time stochastic process $X$ taking values in an $L^2$-Bochner space, where the observation times of the process and the observed coordinates of the measure space can be random. In particular, at an observation time $t$, $X_t$ is observed at random points, and these observations may be incomplete.  
We do not assume to know the distribution of the stochastic process or the observation framework and will learn exclusively from the training data. We only require that our (fairly weak) \Cref{assumption:1,assumption:3,assumption:4,assumption:5,assumption:6,assumption:7,assumption:8} are satisfied. This allows us to train a fully data-driven model approximating the optimal online prediction of $X$, which is given by the conditional expectation process.
In the following we give precise definitions together with the needed assumptions to establish our theoretical guarantees, following the framework of \citet{krach2022optimal}.
For simplicity, we do not use the extensions discussed in \citet{NJODE3, krach2024learning,heiss2024nonparametric} in our description here. However, we remark that it is straightforward to include them in our considered framework, following the explanations in those papers.

\subsection{Technical Background and Observation Framework}
\label{sec:Technical Background and Mathematical Notation}

Let $ d_\Xi, d_X \in \N$ be dimensions and $T > 0$ be the fixed time horizon. We consider a bounded set $\Xi\subset \R^{d_\Xi}$ equipped with its Borel $\sigma$-algebra and a probability measure $\mu_\Xi$, forming the probability space $(\Xi, \mathcal{B}(\Xi), \mu_{\Xi})$. We additionally define a separable Hilbert space $\mathcal{H}:= L^2(\Xi, \R^{d_X})$ of $\mu_\Xi$-equivalence classes of measurable functions $u:\Xi\rightarrow \R^{d_X}$, where for $\varphi \in \mathcal{H}$ and with $|\cdot|_p$ the standard $p$-norm, 
\begin{equation*}
    \lVert \varphi \rVert_{\mathcal{H}}^2 := \int_\Xi |\varphi|_2^2 d\mu_\Xi < \infty.
\end{equation*}
Next, we consider a filtered probability space $(\Omega, \F, \mathbb{F} := \{\F_t\}_{0 \leq t \leq T}, \P )$ and define an $\mathbb{F}$-adapted c\`adl\`ag stochastic process \footnote{A stochastic process $X=(X_t)_{t\in\mathcal{T}}$ is a collection of random variables on a common probability space over an index set $\mathcal{T}$.}  taking values in $\mathcal{H}$, $X:[0, T] \times \Omega \rightarrow \mathcal{H}$, for which we assume square integrability. We also introduce a random variable $\xi:\Omega\rightarrow\Xi$ independent of $X$ with law $\mu_\Xi$, i.e.~$\mu_\Xi=\xi_\#\P$ is the pushforward measure. As a result, for $\omega \in \Omega$, the composition $X_{t,\xi} := (X_t)(\xi)\in L^2(\Omega,R^{d_X})$ is well-defined $\P\text{-a.s.}$ and depends only on the equivalence class $X_t(\omega)\in\mathcal{H}$. Although this can be constructed by considering the product probability space $(\Omega \times \Xi, \mathcal{F}\times\mathcal{B}(\Xi), \P\times\mu_\Xi)$, the formulation here remains in $(\Omega, \mathcal{F}, \P)$ by interpreting spatial evaluations as pullbacks of $\mu_\Xi$-equivalence classes through the random sample $\xi$. This allows terms to be expressed in terms of equivalence classes without making an explicit choice of a representative function.

\begin{rem}
    If $\Xi$ is a finite set, which we can assume to be $\{ 1, \dotsc, d\}$ without loss of generality, and $d_X=1$, then $X$ can equivalently be viewed as an $\R^d$-valued stochastic process, where $\xi \in \Xi$ denotes the coordinates. Moreover, if $|\Xi|=1$ then we are precisely in the setting of \citet{krach2022optimal}. Hence, our setting is a generalization of the one in \citet{krach2022optimal}. When $\Xi$ is infinite, for $\omega\in\Omega$, we can still think of $\xi(\omega) \in \Xi$ as a random point on which the equivalence class $X_t(\omega)$ is evaluated to yield $X_{t,\xi}(\omega)$ on the output space $\R^{d_X}$, independent of the chosen representative $\P$-almost surely. For $d_X > 1$, the corresponding values have $d_X$ components.
\end{rem}

The observation framework consists of the following ingredients.

\begin{itemize}
\item $n: \Omega \to \N_{\geq 0}$, an ${\F}$-measurable random variable, is the random number of observation times, including $t_0$. 
\item $K := \sup \left\{k \in \N \, | \, \P(n > k) > 0 \right\} \in \N \cup\{\infty\}$ is the maximum value of $n-1$, the number of additional observations other than at $t_0$.
\item  $t_k: \Omega \to [0,T] \cup \{ \infty \}$ for $0 \leq k \leq K$ are the \emph{sorted}\footnote{For all $\omega \in \Omega$, $0=t_0 < t_1(\omega) < \dotsb < t_{n(\omega)-1}(\omega) \leq T$.} stopping times\footnote{In particular, $t_i$ is a random variable such that $\{ t_i \leq t\} \in {\F}_t$ for all $1 \leq i \leq n$ and $t \in [0,T]$.}, describing the random observation times, with $t_k({\omega}) = \infty$ if and only if $n({\omega}) \le k$, and $t_0 = 0$.
\item $\tau : [0,T] \times \Omega \to [0,T], \, (t, \omega) \mapsto \tau(t, \omega) := \max\{ t_i(\omega) | 0 \leq i \leq n(\omega)-1, t_i(\omega) \leq t \}$ is the time of the last observation before a certain time $t$.
\item $\kappa : [0,T] \times \Omega \to \N, \, (t, \omega) \mapsto \kappa(t, \omega) := \max\{ i \in \N | 0 \leq i \leq n(\omega)-1, t_i(\omega) \leq t \} $ is the index of the random observation time before time $t$.
\item $J_k:  \Omega \to \N$ for $0\leq k \leq K$ are ${\mathcal{F}}_{t_k}$-measurable random variables, representing the random number of discrete observation points at observation time $t_k$.
\item $\xi^k_j :  \Omega \to \Xi $ for $0\leq k \leq K, j \in \N$ are ${\mathcal{F}}_{t_k}$-measurable random variables, representing the random observation points in space. 
\item $M^k_j :  \Omega \to \{ 0,1 \}^{d_X}$ for $0\leq k \leq K, j \in \N$ are ${\mathcal{F}}_{t_k}$-measurable random variables representing the observation masks on the output space. In particular, $M^k_j \odot X_{t_k, \xi^k_j}$ are the observed values\footnote{Here, $\odot$ is the element-wise multiplication (Hadamard product).} of the process $X$ at observation time $t_k$ and observation point $\xi^k_j$. 
\end{itemize}

Through the masked observations, we define the filtration of the currently available information as $\mathbb{A} := (\mathcal{A}_t)_{t \in [0,T]}$ with 
\begin{equation*}
\mathcal{A}_t := \boldsymbol{\sigma}\left( M^i_j \odot X_{t_i, \xi^i_j}, t_i, J_i, \xi^i_j, M^i_j, t_i \leq t, j \leq J_i \, \middle | \,  (i,j) \in \N^2   \right),
\end{equation*} 
where $\boldsymbol\sigma(\cdot)$ denotes the generated $\sigma$-algebra. In other words typical generators of $\mathcal{A}_t$ are determined through $ M^i_j \odot X_{t_i, \xi^i_j}, t_i, J_i, \xi^i_j, M^i_j$ and their intersection with $t_i \leq t$ and $ j \leq J_i$, for $(i,j) \in \mathbb{N}^2$. Notice the meaning of $J_i$, which determines the number of random observations via $\xi$.

Clearly $\mathcal{A}_t = \mathcal{A}_{\tau(t)}$ for all $t \in [0,T]$. Furthermore for any fixed observation (stopping) time $t_k$ the stopped and pre-stopped $\sigma$-algebras at $t_k$ satisfy \citep[Definitions~2.37 and~8.1]{KarandikarRao2018}
\begin{equation*}
\begin{split}
\mathcal{A}_{t_k} = \boldsymbol{\sigma}\left(  M^i_j \odot X_{t_i, \xi^i_j}, t_i, J_i, \xi^i_j, M^i_j \,\middle|\, (i,j) \in \N^2: i \leq k, j \leq J_i \right), \quad \mathcal{A}_{t_k-}  = \mathcal{A}_{t_{k-1}} \vee \boldsymbol{\sigma}\left(  t_k \right).
\end{split}
\end{equation*} 
The first assertion follows from the very definition of $\mathcal{A}_t $. Second, the pre-stopped sigma algebra $\mathcal{A}_{t_k-}$ contains all information about the generating stopping time $t_k$ but nothing about the events happening at $t_k$ which are not determined by $t_k$. At $t_{k}$ events of the type $M^i_j \odot X_{t_k, \xi^k_j}, J_k, \xi^k_j, M^i_j$ intersected with $ j \leq J_k $ are added, whence $\mathcal{A}_{t_k-}$ only has information stemming from $t_k$ in addition to the information of $\mathcal{A}_{t_{k-1}}$.

We are interested in learning the $L^2$-optimal prediction of $X_t$ given the currently available information $\mathcal{A}_t$. \Cref{lemma: L2 optimality of conditional expectation} implies that the optimizer is given by the equivalence class of the conditional expectation $\hat{X}_{t} = \E\left[ X_{t} \middle| \mathcal{A}_t \right]\in L^2(\Omega \times \Xi, \R^{d_X})$ for $t \in [0,T]$.
This conditional expectation is not trivial to compute, especially if only training data is available and the underlying distributions are not known, as in our setting. However, even in the case that the underlying distributions are known, it can be difficult, or impossible, to derive a closed form solution and expensive to compute numerical approximations. Therefore, our goal is to derive and train a model that approximates $\hat{X} = (\hat{X}_{t})_{t\in [0,T]}$ with theoretical guarantees.

Let $O_{i,j} := (M^i_j \odot X_{t_i, \xi^i_j}, t_i, \xi^i_j, J_i, M^i_j) \in \R^{d}$ for $d = d_X + 1 + d_\Xi+ 1 + d_X$ be the \emph{collection of information} gathered at observation time $t_i$ and coordinate $\xi^i_j$. The sequence of random variables $O_i := (O_{i,j})_{1\leq j \leq J_i}$ is then the collection of observations at time $t_i$, and $O_{[0,t]} := (O_0, \dotsc, O_{\kappa(t)}, 0, \dotsc) \in (\R^{d})^{\N}$ is the total information gathered up to time $t$\footnote{For the last two definitions, we use flattened versions of the respective tuples, by abuse of notation. In particular, instead of a tuple of tuples of elements in $\R^d$, we just consider its projection onto a tuple of elements in $\R^d$.}. Note that the vector is a (flattened) stacking of the finite collections of information with zeros trailing to infinity thereafter and accommodates any realized number of observations. Since $\boldsymbol\sigma(O_{[0,t]}) = \mathcal{A}_t$, we can assert the existence of a function of these observations that corresponds to the conditional expectation. The Doob-Dynkin lemma \citep[Lemma~1.14]{kallenberg2021foundations} implies that for every $s,t \in [0,T]$, there exists a measurable function $F_t:(R^d)^\N\rightarrow \mathcal{H}$ such that $F_t(O_{[0, s]}) = \E[X_t |\mathcal{A}_s]$ holds $\P$-almost surely. We then define
\begin{equation*}
    F : [0,T]  \times (\R^{d})^{\N} \to \mathcal{H},\;(t, O) \mapsto F(t,O):= F_{t}(O)
\end{equation*}
and assume joint measurability of $F$, noting that $\hat X_t = F(t, O_{[0, \tau (t)]})$ $\P$-almost surely, and that the composition $\hat X_{t,\xi}:=(\hat X_t)(\xi) = (F(t,O_{[0,\tau(t)]}))(\xi)$. We write $F(t, \xi, O_{[0, \tau(t)]}):=(F(t,O_{[0,\tau(t)]}))(\xi)$ for the composition by abuse of notation.
\\



\subsection{Model Assumptions}

We make modest assumptions about our observation framework and the function $F$ that recovers the conditional expectation process $\hat{X}_{t}$. 

\begin{assumption} \label{assumption:1} 
For every $0\leq k, m \leq K$, $(M^k_j)_{j\in\N}$ is independent of $t_m$, $n$, $J_m$, $(\xi^m_j)_{j\in\N}$ and 
$ \P ((M^k_j)_i =1 ) > 0$ for every component  $1 \leq i \leq d_X$ of the vector (every component can be observed at any observation time and point).
\end{assumption} 
\begin{assumption} \label{assumption:3}
Almost surely $X$ is not observed at a jump. That is, for all $ 1 \leq i \leq K$, $1\leq j \leq J_i$, $X_{t_i}=X_{t_{i-}}$ $\P\text{-a.s.}$

\end{assumption} 
\begin{assumption} \label{assumption:4}
We assume that there exists a measurable function $f: [0,T]\times(\R^d)^\N\rightarrow\mathcal{H}$ that is a generalized derivative of $F$ in the sense that for all $t\in[0, T]$, 
\begin{equation*}
    F(t, O_{[0, \tau(t)]})=F(\tau(t),O_{[0, \tau(t)]}) +\int_{\tau(t)}^tf(s,O_{[0,\tau(t)]})ds \quad \P\text{-a.s. in }\mathcal{H}.
\end{equation*}
Writing $f(t, \xi, O_{[0, \tau(t)]}):=(f(t, O_{[0, \tau(t)]}))(\xi)$ by the same abuse of notation, this yields
\begin{equation*}
    F(t,\xi,O_{[0,\tau(t)]}) = F(\tau(t),\xi,O_{[0,\tau(t)]}) + \int_{\tau(t)}^t f(s,\xi,O_{[0,\tau(t)]}) ds\quad\P\text{-a.s. in }L^2(\Omega,\R^{d_X}).
\end{equation*}
Moreover, taking $t_{-1} := t_{0-}$ as the initialization of the model, we assume that
\begin{equation}\label{equ:assumption4 bound}
\E\left[ \frac{1}{n} \sum_{i=0}^{n-1} \frac{1}{J_i} \sum_{j=1}^{J_i} \Big(|F(t_i, \xi^i_j , O_{[0,t_i]} )|_2^2 + |F(t_{i-1}, \xi^{i}_j , O_{[0,t_{i-1}]} )|_2^2 
+  \int_0^T | f(t, \xi^i_j , O_{[0,\tau(t)]} )  |_2^2 dt  \Big) \right] < \infty.
\end{equation}


\end{assumption} 
\begin{assumption} \label{assumption:5}
We assume square integrability over the observations $\displaystyle \E\left[\frac{1}{n} \sum_{i=0}^{n-1} \frac{1}{J_i} \sum_{j=1}^{J_i} |X_{t_i, \xi^i_j}|_2^2 \right] < \infty$.
\end{assumption} 
\begin{assumption} \label{assumption:6}
The random number of observation times $n$ and the maximum of the random number of observation points $\sup_{0\leq i < n} J_i$ are integrable, i.e., $\E[n] < \infty$ and $\E[\sup_{0\leq i < n} J_i]<\infty$.
\end{assumption} 
\begin{assumption} \label{assumption:7}
All observation points $\xi^i_j$, $0\leq i \leq K, 1 \leq j \leq \N$ are independent and identically distributed (i.i.d.) copies of $\xi\sim\mu_\Xi$.
\end{assumption} 
\begin{assumption} \label{assumption:8}
The process $X$ is independent of the observation framework, i.e., of the random variables $n, (t_k, J_k, \xi_j^k, M_j^k)_{k,j \in \N_{\ge0}}$. 
\end{assumption}

The following remarks clarify how our assumptions differ from earlier versions of the model, including equivalent formulations with weaker assumptions.

\begin{rem}\label{rem:extension assumptions M0}
In contrast to \citet{krach2022optimal}, we do not need an additional assumption on $M^0_j=1$, since we can directly work with the masked observations $M^i_j \odot X_{t_i, \xi^i_j}$. Furthermore, the previous assumption of continuity of $F$ and $f$ is weakened to integrability, as defined in (\ref{equ:assumption4 bound}).
\end{rem}
\begin{rem}
    \Cref{assumption:3} can be weakened by instead assuming that if $X$ is observed at a jump, then the entire jump itself is observed, i.e. both $X_{t_i-, \xi^i_j}$ and $X_{t_i, \xi^i_j}$ are observed. Using $X_{t_i-, \xi^i_j}$ for the $t_i-$ part of the loss function re-establishes the same results.
    In general, to do so, the model needs to receive both observations, $X_{t_i-, \xi^i_j}$ and $X_{t_i, \xi^i_j}$, as input. To correctly learn the jumps, the model has to get $X_{t_i, \xi^i_j}$ as input at the jump, noting that $X_{t_i-, \xi^i_j}$ does not need to be fed as input to the model if it does not provide additional information (for example, if there is no path-dependence). However, in the case that the left limit does provide additional information, it is necessary to provide the model with both inputs to lose no information. There are multiple natural ways to deal with this in the NJ-ODE framework. We suggest the following two: i) Replace the single $\rho$ evaluation with two consecutive jumps, first feeding $X_{t_i-, \xi^i_j}$ and then $X_{t_i, \xi^i_j}$ (without applying the neural ODE $f$ in between); ii) use $X_{t_i-, \xi^i_j}$ as another part of the input process, i.e., pass the observations $X_{t_i-, \xi^i_j}$ via additional coordinates of the input process, while using $X_{t_i, \xi^i_j}$ as the standard coordinates of the input process \citep[this formulates it in the input-output setting of][]{heiss2024nonparametric}.
\end{rem}
\begin{rem}
    The independence assumptions (\Cref{assumption:1,assumption:8}) can be replaced by conditional independence assumptions as in \citet{NJODE3}.
\end{rem}

As in \cite{heiss2024nonparametric}, the following distance is used throughout the article to compare the similarity of any two c\`adl\`ag processes at arbitrary observation time points $(t_k, \xi^k_1)$ with observation mask $M_{1}^k$. Ultimately, this will be the metric in which we define the convergence of the Operator NJ-ODE to the conditional expectation.

\begin{definition}\label{def:indistinguishability}
The (pseudo) metrics $d_k$ are defined for any two $\mathbb{A}$-adapted c\`adl\`ag processes $\eta, \zeta$ taking values in $\C$, as 
\begin{equation}\label{equ: pseudo metric}
    d_k (\eta, \zeta) = c_0(k)\,  \E\left[ \mathds{1}_{\{n > k\}} \left(  \left| M_{1}^k \odot ( \eta_{t_k,\xi^k_1} - \zeta_{t_k,\xi^k_1} ) \right|_1 + \left| \eta_{t_k-, \xi^k_1} - \zeta_{t_k-,\xi^k_1} \right|_1\right) \right],
\end{equation}
where $c_0(k) = (\P(n > k))^{-1}$.
We call the processes \emph{indistinguishable}, if $d_k(\eta, \zeta)=0$ holds for every $0 \leq k \leq K$.
\end{definition}

This metric represents exactly what we can learn from the training data that is provided through the observation framework. 
In particular, it is only possible to correctly approximate the target process at times and locations where we can potentially observe it. 
Since we are working with expectations and the observations occur at random times and points, all of their possible realizations are taken into account in the definition of the metric.
However, if there are certain time intervals or location sets where the probability of observation is zero, then it is impossible to learn the dynamics of the target process there, and the metric does not take those regions into account. 
While it is possible to remove the observation mask from the second term of the metric, it is needed for the on spot term, since the $\mathbb{A}$-adapted processes are, in general, not independent of $M_{1}^k$ at $t_k$ (which holds at $t_k-$). In particular, the on spot terms can only be controlled where they are actually observed, while the left-limit terms are always controlled\footnote{
    If the observation $t_{k+1}$ can happen arbitrarily close to the previous observation $t_k$, then this allows to always control the on spot terms \citep[cf.~][Theorem~5.1]{crowell2025neural}.
}.

\section{The Operator Neural Jump ODE Model}\label{sec:proposed method}

In the following, we combine the NJ-ODE model \citep{krach2022optimal} with ideas and parts of the neural operator framework of \citet{kovachki2023neural} such that it becomes applicable for predicting function-valued processes.
By its nature, the path signature is easily defined for paths taking values in a finite-dimensional space. Since this is not the case here, we shall not use the signature transform. In \citet{krach2022optimal} and follow-up works \citep{NJODE3,krach2024learning}, the signature was a key ingredient for deriving the theoretical guarantees. Here, we will instead use the recurrent structure of the model to derive equivalent results. 

We start by briefly discussing the main challenges in this infinite-dimensional setting.

\paragraph{Arbitrary number of concurrent observation points.} 
To be generally applicable, the model needs to be able to process an arbitrary number of observations as concurrent inputs. Since their spatial ordering through the observation points $\xi_j^k$, $1 \leq j \leq J_k$ does not naturally translate into hierarchical ordering, the way these inputs are processed should be order independent. 
We achieve this utilizing \emph{Integral Kernel Operators} from the Neural Operator framework of \citet{kovachki2023neural}.

\paragraph{Functional-valued model.}
Our objective is to optimally predict, for every time $t$, the random process $X_t$ taking values in the Hilbert space $\C$. The target of our model is therefore the $\mu_\Xi$-equivalence class of the conditional expectation. The model, however, must be evaluated point-wise, so its prediction is necessarily a representative that induces an element in $\C$.
Again we can borrow the approach from the Neural Operator framework, which simply uses the evaluation point $\xi \in \Xi$ as additional input to the model. Hence, one only needs to ensure integrability (for example through bounded-output neural networks) of the mapping from $\xi$ to the model output to ensure that this representative induces a well-defined element of $\C$.
There are two possible approaches for using the evaluation point $\xi$ as input to the model. The standard approach (also used in \citep{kovachki2023neural}) is to input it together with the observations. Then, for example, a kernel can be used to link the evaluation point with the observation points. The disadvantage of this approach is that the entire model has to be reevaluated from $t=0$ to $t=T$ for each evaluation point $\xi$. A more efficient approach is to pass the evaluation point $\xi$ only to the readout network as input. Hence, the model's latent variable $H_t$ needs to simultaneously carry the information for all possible evaluation points, and only through the readout map is this projected to a specific evaluation point $\xi$. 
Therefore, the computationally intensive part to compute $H_t$ only needs to be done once. However, this comes at the price that, in general, $H_t$ has to be much higher-dimensional to contain all the needed information. Moreover, this does not allow for efficient linking between observation points and the evaluation point $\xi$.
In the following, we present the model using the standard Neural Operator type framework, where the entire model needs to be reevaluated for each evaluation point $\xi$, keeping in mind that changes to this framework are possible to increase computational efficiency.

In the following we assume that ${\mathcal{N}}$ is a set of feedforward neural networks with Lipschitz continuous activation functions such that for any $d,D \in \N$, any finite measure $\mu$ on  $\R^d$  and any $1 \leq p < \infty$  we have that  ${\mathcal{N}}$ is dense in the space $L^p( \R^d,\R^D;\mu)$ (with respect to the $L^p$-norm). In particular, ${\mathcal{N}}$ has to satisfy the standard $L^p$ universal approximation theorem, which is the case e.g.\ for the set of 1-hidden-layer neural networks with continuous, bounded and non-constant activation function \cite[Theorem~1]{hornik1991approximation}. Moreover, we assume that all affine functions are in $\mathcal{N}$. 

\subsection{O-NJ-ODE Definition}

We define the pure-jump process counting the additional observation times
$n_t:=\sum_{i=1}^{n-1} 1_{\left[t_i, \infty\right)}(t)$, $t \in [0,T]$, whose increments $dn_t$ vanish everywhere except at observation times $t_k$ where $\Delta n_{t_k} = 1$. 
With this, our Operator Neural Jump ODE model is defined as follows.

\begin{definition}\label{def:O-NJ-ODE}
    The  \emph{Operator Neural Jump ODE (O-NJ-ODE)} is defined as the following operator that can be evaluated for any $\xi \in \Xi$,
    \begin{equation}\label{equ:O-NJ-ODE}
    \begin{split}
    H_0(\xi) &= \rho_{\theta_2}\left( H_{0-}(\xi), 0,  \xi,  \psi_{\theta_3}(0, \xi,O_{0}) \right), \\
    dH_t (\xi) &= f_{\theta_1}\left(H_{t-}(\xi), t, \tau(t), \xi \right) dt +\left[ \rho_{\theta_2}\left( H_{t-}(\xi), t, \xi,  \psi_{\theta_3}(t,\xi,O_{\kappa(t)})  \right) - H_{t-}(\xi) \right] dn_t, \\
    Y_t(\xi) &=  g_{ \theta_4}\left(H_t(\xi)\right),
    \end{split}
    \end{equation}
    where we use the self-imputed observations in the inputs $O_{\kappa(t)}$, defined as
    \begin{equation}\label{equ:self-imputed observations}
        \tilde{X}_{t_k}^j = M_j^k \odot X_{t_k, \xi_{j}^{k}} + (1_{d_X} - M_j^k) \odot Y_{t_k-}\left(\xi_{j}^{k}\right)
    \end{equation}
    and the prior prediction $Y_{0-}(\xi) = g_{ \theta_4}\left(H_{0-}(\xi)\right)$ at $t=0$, with $H_{0-}(\xi) = \rho_{\theta_2}\left( 0, 0,  \xi,  0  \right)$. We call $\psi_{\theta_3}$ the generalized kernel, which has a special structure; see below.
    The functions $f_{\theta_1}, \rho_{\theta_2}, \psi_{\theta_3},  g_{\theta_4} \in \mathcal{N}$ are feedforward neural networks with trainable parameters $\theta = (\theta_1, \theta_2,  \theta_3, \theta_4) \in \Theta$, where $\Theta$ is the set of all possible weights for the model. The algorithm for the O-NJ-ODE is displayed in \Cref{algo: ONJODE}.
\end{definition}

\citet[Thm. 7, Chap. V]{Pro1992} implies that a unique solution of \eqref{equ:O-NJ-ODE} exists for every $\xi \in \Xi$. We write $Y_{t,\xi}^\theta(O_{[0,t]})$ to emphasize the dependence of the output $Y_{t,\xi}=Y_t(\xi)$ on $\theta$ and the collection of information $O_{[0,t]}$.

\paragraph{The Generalized Kernel}
The function $\psi_{\theta_3}(t,\xi,O_{\kappa(t)})$, which we denote as the \emph{generalized kernel}, is the critical ingredient that learns spatial relations at each observation time. One subtlety that must be considered is that the inputs $O_i$ for $i=0,\ldots,n-1$ are of unbounded dimension; $\psi_{\theta_3}$ must encode each of these observation collections $O_{\kappa(t)}$, which may have variable dimension, into a fixed-dimensional object to be used as input for the jump neural network $\rho_{\theta_2}$. Consequently, the O-NJ-ODE can use an implicit projection of the new inputs onto a finite (but trainable) dimension before passing it to a standard neural network $\tilde \psi_{\theta_3}$. Alternatively, the generalized kernel can aggregate fixed and finite-dimensional information locally in space. This has the added benefit of including spatial context for enhanced performance. We consider a generalized kernel structure of the form
\begin{equation} \label{eq:gen kernel}
    \psi_{\theta_3}(t,\xi,O_{\kappa(t)})=\frac{1}{J_{\kappa(t)}} \sum_{j=1}^{J_{\kappa(t)}} \varphi^1_{\theta_3}\left(\xi, \xi_{j}^{\kappa(t)}, \tilde{X}_{t}^j, M_j^{\kappa(t)}, J_{\kappa(t)} \right) \odot \varphi^2_{\theta_3}\left(j\right),
\end{equation}
where the local aggregation of all observations at time $t$ is done without prior information loss through projection. In making this choice of generalized kernel, $\varphi^1_{\theta_3}, \varphi^2_{\theta_3} \in \mathcal{N}$ with trainable parameters $\theta_3 \in \Theta$.

With sufficient modeling capacity and considering UAT assumptions, we can learn the global spatial relationship to enrich our hidden state update. Many special architectures with different advantages and disadvantages can be chosen for the generalized kernel. In our proof, we will use a general and especially simple architecture, however, in practice others might be preferred. We refer to \citet[Section~4]{kovachki2023neural} for a non-exhaustive overview of possible choices. Although direct inclusion of spatial context is theoretically unnecessary, its practical inclusion will improve learning and performance. 

\begin{rem}
    For a more efficient model, we use the self-imputed observations $\tilde{X}_{t_k}^j$ as inputs instead of the masked observation $M_j^k \odot X_{t_k, \xi_{j}^{k}}$. At $t=0$, the self-imputation could in principle be done with $0$. Instead, we first apply the jump part $\rho_{\theta_2}$ of the model without any input to generate the uninformed prior mean prediction $H_{0-}$. To produce good results, we also include the initial time $t_0$ in the objective function (\ref{equ:Psi}).
\end{rem}

\begin{algorithm}[H]
\begin{algorithmic}[1]
\State\textbf{Input: }Data points $\{O_i\}_{i=0}^{n-1}$ across observation times, with $ O_i =(O_{i,j})_{j=1}^{J_i}$ and $O_{i,j} = (\tilde X_{t_i}^j, t_i, \xi_j^i, J_i, M_j^i)$
\State \textbf{Evaluation point:} $\xi\in\Xi$ \Comment{The O-NJ-ODE is trained at one evaluation point}
\State \textbf{Initialization: } For time $t=t_{0-}$, consider $H_{0-}(\xi)=\rho_{\theta_2}(0, 0, \xi, 0)$ and $Y_{0-}(\xi)=g_{\theta_4}(H_{0-}(\xi))$
\State $t_{n} :=T$
\State $Y = [\,]$, $\text{times}=[\,]$ \Comment{Empty lists to collect output values}
\For{$i=0$ to $n-1$} 
\State $H_{t_i}(\xi) = \rho_{\theta_2}(H_{t_{i}-}(\xi), t_i, \xi, \psi_{\theta_3}(t_i, \xi, O_{i}))$ \Comment{Jump update of hidden state at obs. time}
\State $Y_{t_i}(\xi) = g_{\theta_4}(H_{t_i}(\xi))$ \Comment{Decoded process at obs. time}
\State add $Y_{t_i}(\xi)$ to the list $Y$, add $t_i$ to the list $\text{times}$
\State $t=t_i$
\While{$t +\Delta t \le t_{i+1}$} 
\State $H_{t+\Delta t}(\xi) = H_t(\xi) + f_{\theta_1}(H_{t}(\xi), t+\Delta t, t, \xi) \, \Delta t$ \Comment{Hidden state evolves between observations}
\State $Y_{t+\Delta t}(\xi) = g_{\theta_4}(H_{t+\Delta t}(\xi))$
\State add $Y_{t+\Delta t}(\xi)$ to the list $Y$, add $t+\Delta t$ to the list $\text{times}$
\State $t=t+\Delta t$
\EndWhile
\EndFor
\State \textbf{return:} $Y$, $\text{times}$
\end{algorithmic}
    \caption{The O-NJ-ODE}
    \label{algo: ONJODE}
\end{algorithm}

\subsection{Objective Function}\label{sec:Objective Function}
Let $\mathbb{D}$ be the set of all c\`adl\`ag $\mathcal{H}$-valued $\mathbb{A}$-adapted processes.
By \Cref{assumption:4}, $\hat{X} \in \mathbb{D}$. $\mathbb{D}$ acts as the  set of candidates for approximating $X$ with the known information ($\mathbb{A}$-adaptedness).
For any $Z \in \mathbb{D}$ we use the same notations as introduced for $X$. 
Then we define our objective functions as
\begin{align} 
\Psi: \, \mathbb{D} &\to \R, &
Z &\mapsto \Psi(Z) := \E\left[ \frac{1}{n} \sum_{i=0}^{n-1} \frac{1}{J_i} \sum_{j=1}^{J_i}    \left\lvert M^i_j \odot ( X_{t_i, \xi_j^i} - Z_{t_i, \xi_j^i} ) \right\rvert_2^2 + \left\lvert M^i_j \odot (X_{t_i, \xi_j^i} - Z_{t_{i}-, \xi_j^i} ) \right\rvert_2^2 \right] \label{equ:Psi} \\
\Phi : \, \Theta &\to \R,  & \theta &\mapsto \Phi(\theta) := \Psi(Y_{\cdot}^{\theta}(O_{[0,\cdot]})). \label{equ:Phi}
\end{align}
$\Phi$ is our (theoretical) loss function, which is well-defined since the definition of $Y^\theta$ implies that its induced equivalence class is an element of $\mathbb{D}$. 

To construct the (practically relevant) empirical objective function, we formalize the sampling framework for observed data. We assume that we have a sample size of $N \in \N$ independent realizations of the path $X$ together with independent realizations of the observation framework. In particular, let us assume that for $1 \leq l \leq N$ and for all $k, j \in \N$ the random processes and variables $X^{(l)} \sim X$, $(M_j^k)^{(l)} \sim M_j^k$, $J_k^{(l)} \sim J_k$, $(\xi_j^k)^{(l)} \sim \xi_j^k$ and $(n^{(l)}, t_1^{(l)}, \dotsc, t_{n^{(l)}-1}^{(l)}) \sim ( n, t_1, \dotsc, t_{n-1})$ are respectively independent and identically distributed. Our training data set is assumed to be a realization of all $N$ paths, each of which is denoted by $O^{(l)}=(O^{(l)}_0, ..., O^{(l)}_{n^{(l)}-1})$ where $O^{(l)}_{i}=(O^{(l)}_{i,1}, ..., O^{(l)}_{i,J_i})$ for $i=0,...,n^{(l)}-1$ with $O^{(l)}_{i,j} := ((M_j^i)^{(l)}\odot X_{t_i^{(l)}, (\xi_j^i)^{(l)}}^{(l)}, t_i^{(l)},(\xi^i_j)^{(l)}, J_i^{(l)},(M^i_j)^{(l)})\in\R^d$. Here, $O_{i,j}^{(l)}$ is the realized information collection at observation coordinate $(t_i,\xi_j^i)$ of the sample path indexed by $l$. 
We write $(M_j^k)^{(l)}:=M_j^k(O^{(l)})$, $(Y^{\theta}_{t_k^{(l)},(\xi_j^k)^{(l)}})^{(l)} := Y^{\theta }_{t_k,\xi_j^k}\left(O^{(l)}\right)$, and $X_{t_k^{(l)}, (\xi_j^k)^{(l)}}^{(l)}:=X_{t_k,\xi_j^k}(O^{(l)})$ to simplify the notation. 

With this construction, we define our empirical objective function as the Monte Carlo estimate of the true objective function (\ref{equ:Phi}). For our $N$ path dataset, this is given by 
\begin{multline}\label{equ:appr loss function}
\hat\Phi_N(\theta) := \frac{1}{N} \sum_{l=1}^N  \frac{1}{n^{(l)}}\sum_{i=0}^{n^{(l)}-1} \frac{1}{J_i^{(l)}} \sum_{j=1}^{J_i^{(l)}} \left(  \left\lvert \left[M_j^i \odot \left( X_{t_i,\xi_j^i} - Y^{\theta }_{t_i,\xi_j^i}\right)\right](O^{(l)}) \right\rvert_2^2 \right. \\
+ \left. \left\lvert \left[M_j^i \odot \left( X_{t_i,\xi_j^i}- Y^{\theta }_{t_i-,\xi_j^i} \right)\right](O^{(l)}) \right\rvert_2^2 \right),
\end{multline}
and converges to the true objective function as $N\to\infty$ by \Cref{thm:MC convergence Yt}. This makes it a suitable loss function for training the O-NJ-ODE.

\begin{rem}
    We use the input-output loss function first introduced for Neural Jump ODEs in \citet{heiss2024nonparametric}. This loss function directly allows for an input-output setting and simplifies the proofs. 
    However, the standard loss function of \citet{krach2022optimal} could be used equivalently in the present setting (of coinciding input and output processes). Despite the slightly more complicated proof, the standard loss has an advantage in terms of the speed of convergence as discussed in \citet[Section~7]{heiss2024nonparametric}.
\end{rem}


\section{Convergence Guarantees}\label{sec:Convergence Guarantees}
In this section, we show that our fully data-driven Operator Neural Jump ODE converges to the optimal prediction.
Following \citet{krach2022optimal}, we first show convergence for the theoretical objective function $\Phi$. Afterwards, we prove that its Monte Carlo approximation $\hat \Phi$ converges to it.

\subsection{Convergence of Theoretical Loss Function}\label{sec:Convergence of Theoretical loss function}

Let ${\Theta}_m \subset \Theta$ be the compact set of possible weights $\theta = (\theta_1, \theta_2,  \theta_3, \theta_4) \in \Theta$  for the four neural networks defining the Operator NJ-ODE \eqref{equ:O-NJ-ODE}, such that their widths and depths are at most $m$  and all their norms satisfy $|\theta_i|_2 \leq m$. Note that $\argmin_{\theta\in\Theta_m}\Phi(\theta)\ne \emptyset$, since an application of Fatou's lemma shows that $\Phi(\theta)$ is lower semicontinuous on the compact set $\Theta_m$. When speaking of the projection of this set to one of the weights $\theta_i$, we use the notation ${\Theta}_m^i$

\begin{theorem}\label{thm:1} 
Let $\theta^{\min}_m \in \Theta_m^{\min} := \argmin _{\theta \in \Theta_m}\{ \Phi(\theta) \}$ for every $m \in \N$. If Assumptions~\ref{assumption:1} to \ref{assumption:8} are satisfied, then, for $m \to \infty$, the value of the loss function $\Phi$ \eqref{equ:Phi} converges to the minimal value of $\Psi$ \eqref{equ:Psi} which is uniquely achieved by $\hat{X}$ up to indistinguishability (cf. Definition~\ref{def:indistinguishability}), i.e.,
\begin{equation*}
\Phi(\theta_m^{\min}) \xrightarrow{m \to \infty} \min_{Z \in \mathbb{D}} \Psi(Z) = \Psi(\hat{X}).
\end{equation*}
Furthermore, for every $0 \leq k \leq K$ we have that $Y^{\theta_m^{\min}}$ converges to $\hat{X}$ in the metric $d_k$ \eqref{equ: pseudo metric} as $m \to \infty$.
\end{theorem}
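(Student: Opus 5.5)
The proof will follow the standard NJ-ODE template: first show that $\hat X$ is the unique minimizer of $\Psi$ up to indistinguishability, then build an O-NJ-ODE whose loss is arbitrarily close to $\Psi(\hat X)$, and finally convert convergence of the loss into convergence in $d_k$.

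\textbf{Step 1: $\hat X$ minimizes $\Psi$.} For $Z\in\mathbb{D}$, expand each summand of $\Psi(Z)$ around $\hat X$. For the on-spot term, write $X_{t_i,\xi^i_j}-Z_{t_i,\xi^i_j}=(X-\hat X)_{t_i,\xi^i_j}+(\hat X-Z)_{t_i,\xi^i_j}$. The cross term vanishes after conditioning on $\mathcal{A}_{t_i}$ together with the observation framework, using \Cref{assumption:7,assumption:8}, \Cref{assumption:1} and \Cref{lemma: L2 optimality of conditional expectation}. Two points need care here. Since $\xi^i_j$ is itself part of $\mathcal{A}_{t_i}$, the masked value $M^i_j\odot X_{t_i,\xi^i_j}$ is observed, so $M^i_j\odot(X-\hat X)_{t_i,\xi^i_j}=0$; the on-spot term then reduces to $|M^i_j\odot(X-Z)_{t_i,\xi^i_j}|^2$ with $\hat X$ exactly matching it. For the $t_i-$ term, $Z_{t_i-}$ and $\hat X_{t_i-}$ are $\mathcal{A}_{t_i-}$-measurable. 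By \Cref{assumption:3}, $X_{t_i}=X_{t_i-}$, and $\xi^i_j$ and $M^i_j$ are independent of $\mathcal{A}_{t_i-}$ and $X$. Hence the cross term vanishes and
\[
\Psi(Z)=\Psi(\hat X)+\E\Big[\tfrac1n\sum_i\tfrac1{J_i}\sum_j\big(|M^i_j\odot(\hat X-Z)_{t_i,\xi^i_j}|_2^2+|M^i_j\odot(\hat X-Z)_{t_i-,\xi^i_j}|_2^2\big)\Big].
\]
Finiteness of $\Psi(\hat X)$ follows from \Cref{assumption:4,assumption:5}. From this identity we get uniqueness up to indistinguishability. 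Bound $d_k$ by Cauchy--Schwarz, using $\E[n]<\infty$ and $\E[\sup J_i]<\infty$ (\Cref{assumption:6}) to undo the weights $1/n$ and $1/J_i$. Exchangeability of the $\xi^k_j$ lets us pass from the average over $j$ to $j=1$. Independence of the mask at $t_k-$ together with $\P((M)_i=1)>0$ removes the mask in the left-limit term. The result is $d_k(\hat X,Z)\le C_k\,(\Psi(Z)-\Psi(\hat X))^{1/2}$. This inequality also gives the second claim of the theorem once the first is established.

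\textbf{Step 2: approximation (main obstacle).} Given $\varepsilon>0$, we must find some $m$ and $\theta\in\Theta_m$ with $\Phi(\theta)\le\Psi(\hat X)+\varepsilon$. By the identity above, it suffices to make the weighted $L^2$ errors $Y^\theta_{t_i}-F(t_i,\cdot,O_{[0,t_i]})$ and $Y^\theta_{t_i-}-F(t_{i-1},\cdot)-\int f$ small at the observation points. Since $F$ and $f$ are only measurable, the plan is:
\begin{itemize}
\item truncate to $n\le N_0$ and $J_i\le J_0$, with the remaining mass controlled by \eqref{equ:assumption4 bound} and dominated convergence;
\item design the latent state $H$ to store the full observation history exactly. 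The generalized kernel \eqref{eq:gen kernel} with $\varphi^2(j)$ acting as one-hot slot selectors places each $O_{i,j}$ (multiplied by $J_i$ to undo the averaging) into its own coordinate block. Affine networks in $\rho$ can copy the blocks into a memory segment indexed by $i$, so no approximation error arises in this part. The self-imputed inputs contain the masked observations plus known quantities, so the original information $O_{[0,t_i]}$ is recoverable;
\item $f_\theta$ integrates time by carrying a clock $t-\tau(t)$, which can be done with a constant derivative;
\item the readout $g$, fed with $(\text{memory},\,t-\tau,\,\tau,\,\xi)$, approximates the measurable target $G(h)=F(\tau,\xi,O)+\int_\tau^t f$. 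This is a single measurable function in $L^2$ of the finite measure given by the pushforward of the joint law of these finitely many variables, with time integrated uniformly and combined with the observation weights. The universal approximation property of $\mathcal{N}$ in $L^p(\mu)$ then gives a network $g$ with small error.
\end{itemize}
The delicate point is that the required law mixes evaluations at $t_i$ and $t_i-$, and the points $\xi^i_j$ are independent of the stored history. I would therefore take $\mu$ to be the sum of the finitely many pushforwards, one for each pair $(i,j)$ and each choice of on-spot versus left-limit. This is still a finite measure, so the $L^2$ universal approximation property applies.

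\textbf{Step 3: conclusion.} Because the sets $\Theta_m$ increase with $m$, we have $\Psi(\hat X)\le\Phi(\theta^{\min}_m)\le\Phi(\theta)$ for all large $m$. Hence the loss converges to $\Psi(\hat X)$, and the $d_k$ convergence follows from the inequality in Step 1.
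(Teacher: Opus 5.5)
Your Steps~1 and~3 coincide with the paper's. Both use the orthogonal decomposition of $\Psi$ from \Cref{lemma: L2 optimality of conditional expectation}. Both bound $\Psi(Z)-\Psi(\hat X)$ from below by a multiple of $d_k(\hat X,Z)^2$, via H\"older, norm equivalence, exchangeability in $j$ and independence of the mask at $t_k-$. Both finish with the same squeeze.

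Step~2 is a genuinely different construction. The paper splits $H=(\bar H,\tilde H)$ into an estimate part and a storage part. It lets $\bar\rho$ approximate $F$ in $L^2(\nu_F)$, with \Cref{lemma: expectation weighted sum over t_i terms} merging the $t_i$ and $t_{i-1}$ evaluations into one measure, and lets $\bar f$ approximate $f$ in $L^2(\nu_f)$. The readout $g$ is a projection, and the left-limit error is bounded by $|\bar\rho-F|(t_{i-1})+\int_0^T|\bar f-f|\,dt$ plus Cauchy--Schwarz.

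You instead make $\rho$, $f$ and the kernel exact (memory, clock, $\tau$, $\xi$) and put all approximation into $g$. There is one target $G$ and one finite measure $\mu$, the loss-weighted joint law of the readout's inputs. This buys three things:
\begin{itemize}
\item A single use of the $L^2(\mu)$ approximation property, with the deviation term at most $\|g-G\|^2_{L^2(\mu)}$ (the masks only help).
\item The neural ODE reduces to a clock, so $f$ enters only through the formula for $G$.
\item Because $\mu$ is the actual joint law, you never need the evaluation point to be independent of the history. The paper's $\nu_F,\nu_f$ are built from $\mu_\Xi\times\P$ and tacitly need that independence. Yet at on-spot evaluations, and for $t\ge t_i$ inside $\int_0^T$, the state $H(\xi^i_j)$ depends on $\xi^i_j$. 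Your own remark that $\xi^i_j$ is independent of the stored history is likewise true only for left limits, but you never use it.
\end{itemize}
In exchange, the paper's route keeps the networks in their designed roles: the jump network learns $F$, the ODE network learns $f$, and the readout is trivial, as in earlier NJ-ODE proofs.

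Two points in your Step~2 do not close as written.

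First, $\mu$ charges only finitely many pairs $(i,j)$ on $\{n\le N_0,\ \sup_i J_i\le J_0\}$. So $g$ is unconstrained on the overflow event, where the deviation term is $|Y^\theta-\hat X|^2$. \eqref{equ:assumption4 bound} with dominated convergence controls only the $|\hat X|^2$ part of that. Fix this as the paper does with its extra inputs $n_t,J^*_t$:
\begin{itemize}
\item carry an observation counter in $H$ (an affine increment in $\rho$; the $J_i$ are stored anyway);
\item let $\mu$ be the loss-weighted law over all pairs, which has total mass $2$;
\item approximate $G\,\mathbf 1_{\{\text{no overflow so far}\}}$.
\end{itemize}
The leftover error is then $\hat X$ restricted to the overflow event, which vanishes.

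Second, the memory stores $\tilde X^j_{t_k}=M^k_j\odot X_{t_k,\xi^k_j}+(1_{d_X}-M^k_j)\odot g(H_{t_k-}(\xi^k_j))$. So $\mu$ depends on the very $g$ you choose by universal approximation. Recoverability of $M\odot X$ from $(\tilde X,M)$ makes the target measurable, but it does not fix the measure. One way out:
\begin{itemize}
\item give $g$ a bounded activation, so $|g|\le C$;
\item let $\varphi^1$ output, componentwise in $(x,M)=(\tilde X^j,M_j)$, the quantity $(x-C(1-M))^+-(-x-C(1-M))^+$, which equals $M\odot x$ whenever the imputed coordinates are bounded by $C$.
\end{itemize}
The memory then holds the true masked observations and its law does not depend on $g$, so you can pick $g$ first and $C\ge\sup|g|$ afterwards. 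The paper's proof has the same circularity, since it treats the kernel input as raw observations.

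Finally, ``multiplied by $J_i$'' and writing into ``a segment indexed by $i$'' are products that no network forms exactly. Neither is needed. Use a shift register, as the paper does, and let $\varphi^1$ also output the constant $1$ so that each block carries $1/J_i$. Then $G$ can absorb the measurable decoding.
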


Before proving the theorem, we require the following two lemmas from \citet{heiss2024nonparametric} reformulated for the current setting. The first follows straightforwardly from \cref{assumption:1}, \cref{assumption:8}, and the $L^2$-optimality of the conditional expectation. A full proof is provided for the second Lemma. 

\begin{lemma}\label{lemma: L2 optimality of conditional expectation}
    For an $\mathbb{A}$-adapted process $\eta$ 
    we have

    \begin{multline*}
        \E\left[\frac{1}{n}\sum_{i=0}^{n-1} \frac{1}{J_i} \sum_{j=1}^{J_i} \left\lvert M_j^{i} \odot (X_{t_i, \xi_j^i} - \eta_{t_i-,\xi_j^i}) \right \rvert_2^2\right] \\
        = \E\left[\frac{1}{n}\sum_{i=0}^{n-1} \frac{1}{J_i} \sum_{j=1}^{J_i} \left\lvert M_j^{i} \odot (X_{t_i, \xi_j^i} - \hat{X}_{t_i-,\xi_j^i}) \right \rvert_2^2\right] + \E\left[\frac{1}{n}\sum_{i=0}^{n-1} \frac{1}{J_i} \sum_{j=1}^{J_i} \left\lvert M_j^{i} \odot (\hat{X}_{t_i-, \xi_j^i} - \eta_{t_i-,\xi_j^i}) \right \rvert_2^2\right],
    \end{multline*}

    and also

    \begin{multline*}
        \E\left[\frac{1}{n}\sum_{i=0}^{n-1} \frac{1}{J_i} \sum_{j=1}^{J_i} \left\lvert M_j^{i} \odot (X_{t_i, \xi_j^i} - \eta_{t_i,\xi_j^i}) \right \rvert_2^2\right] \\
        = \E\left[\frac{1}{n}\sum_{i=0}^{n-1} \frac{1}{J_i} \sum_{j=1}^{J_i} \left\lvert M_j^{i} \odot (X_{t_i, \xi_j^i} - \hat{X}_{t_i,\xi_j^i}) \right \rvert_2^2\right] + \E\left[\frac{1}{n}\sum_{i=0}^{n-1} \frac{1}{J_i} \sum_{j=1}^{J_i} \left\lvert M_j^{i} \odot (\hat{X}_{t_i, \xi_j^i} - \eta_{t_i,\xi_j^i}) \right \rvert_2^2\right], \\
    \end{multline*}

    which, when taking the minimum over $\eta\in\mathbb{D}$, implies that both

    \begin{align*}
        \min_{\eta\in\mathbb{D}}\E\left[\frac{1}{n}\sum_{i=0}^{n-1} \frac{1}{J_i} \sum_{j=1}^{J_i} \left\lvert M_j^{i} \odot (X_{t_i, \xi_j^i} - \eta_{t_i-,\xi_j^i}) \right \rvert_2^2\right] &= \E\left[\frac{1}{n}\sum_{i=0}^{n-1} \frac{1}{J_i} \sum_{j=1}^{J_i} \left\lvert M_j^{i} \odot (X_{t_i, \xi_j^i} - \hat{X}_{t_i-,\xi_j^i}) \right \rvert_2^2\right] \\
        \min_{\eta\in\mathbb{D}}\E\left[\frac{1}{n}\sum_{i=0}^{n-1} \frac{1}{J_i} \sum_{j=1}^{J_i} \left\lvert M_j^{i} \odot (X_{t_i, \xi_j^i} - \eta_{t_i,\xi_j^i}) \right \rvert_2^2\right] &= \E\left[\frac{1}{n}\sum_{i=0}^{n-1} \frac{1}{J_i} \sum_{j=1}^{J_i} \left\lvert M_j^{i} \odot (X_{t_i, \xi_j^i} - \hat{X}_{t_i,\xi_j^i}) \right \rvert_2^2\right].
    \end{align*}
\end{lemma}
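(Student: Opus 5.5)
We prove the second identity (on-spot terms) and obtain the first in the same way. For each fixed pair $(i,j)$ we expand the square by writing $X_{t_i,\xi^i_j}-\eta_{t_i,\xi^i_j} = (X_{t_i,\xi^i_j}-\hat X_{t_i,\xi^i_j}) + (\hat X_{t_i,\xi^i_j}-\eta_{t_i,\xi^i_j})$. This gives the two squared terms of the claim plus a cross term
\[
2\,\E\Big[\tfrac{1}{n}\tfrac{1}{J_i}\mathds{1}_{\{i<n,\, j\le J_i\}}\big(M^i_j\odot(X_{t_i,\xi^i_j}-\hat X_{t_i,\xi^i_j})\big)^\top\big(M^i_j\odot(\hat X_{t_i,\xi^i_j}-\eta_{t_i,\xi^i_j})\big)\Big].
\]
Summing over $(i,j)$ then only requires that every cross term vanishes. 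We may assume the left-hand side is finite. Using $|a+b|^2\le 2|a|^2+2|b|^2$ together with \Cref{assumption:4,assumption:5} (which control $\hat X = F$ at observation points), every term is integrable. This justifies the interchange of sum and expectation by Fubini, and makes the cross terms integrable by Cauchy–Schwarz.

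To kill a cross term we condition on a suitable $\sigma$-algebra $\mathcal{G}$. We take $\mathcal{G}$ generated by $\mathcal{A}_{t_i}$ together with the whole observation framework: $n$, all $t_k$, $J_k$, $M^k_l$, and all $\xi^k_l$. Since $\eta$ and $\hat X$ are $\mathbb{A}$-adapted, the factors $M^i_j$, $1/n$, $1/J_i$, the indicator, and $\hat X_{t_i,\xi^i_j}-\eta_{t_i,\xi^i_j}$ are all $\mathcal{G}$-measurable. The claim therefore reduces to showing
\[
\E\big[M^i_j\odot X_{t_i,\xi^i_j}\,\big|\,\mathcal{G}\big] = M^i_j\odot \hat X_{t_i,\xi^i_j}
\]
coordinatewise on $\{i<n, j\le J_i\}$. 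The coordinates with $(M^i_j)_r=1$ are observed, so there the difference is $\mathcal{G}$-measurable. More directly, we use that $\hat X_{t_i}=\E[X_{t_i}\mid\mathcal{A}_{t_i}]$ and evaluate at the independent point $\xi^i_j$. By \Cref{assumption:8} and \Cref{assumption:1}, the extra observation-framework variables beyond $\mathcal{A}_{t_i}$ carry no information about $X$. Hence conditioning on $\mathcal{G}$ reduces to conditioning on $\mathcal{A}_{t_i}$, and the composition with $\xi^i_j$ commutes with this conditional expectation, since $\xi^i_j$ is $\mathcal{G}$-measurable and independent of $X$ (\Cref{assumption:7}).

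\textbf{Left-limit terms.} For the first identity the same argument applies with $\mathcal{G}' \supseteq \mathcal{A}_{t_i-}=\mathcal{A}_{t_{i-1}}\vee\sigma(t_i)$, augmented by $\xi^i_j$, $M^i_j$, $J_i$. Here we additionally use \Cref{assumption:3}, that $X_{t_i}=X_{t_i-}$ almost surely. Together with independence of the observation framework from $X$, this gives $\E[X_{t_i,\xi^i_j}\mid \mathcal{G}']=\hat X_{t_i-,\xi^i_j}$, where $\hat X_{t_i-}$ is identified with $\E[X_{t_i}\mid\mathcal{A}_{t_{i-1}}\vee\sigma(t_i)]$ via the left limit of $F$ along \Cref{assumption:4}. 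The minimisation statements follow immediately: the first summand does not depend on $\eta$, the second is nonnegative, and it vanishes for $\eta=\hat X\in\mathbb{D}$.

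\textbf{Main obstacle.} The hard step is justifying rigorously that conditioning on $\mathcal{G}$, which contains both the function-valued information and the random evaluation point, turns $X_{t_i,\xi^i_j}$ into $\hat X_{t_i,\xi^i_j}$. This involves a freezing lemma for the Bochner-valued conditional expectation evaluated at a $\mathcal{G}$-measurable independent point. We would work on the product space $\Omega\times\Xi$, and test against products of indicator functions of $\mathcal{A}_{t_i}$-events and functions of $(\xi^i_j, \text{framework})$. A further delicate point is the left-limit identification $\hat X_{t_i-}=\E[X_{t_i}\mid\mathcal{A}_{t_i-}]$, for which we use \Cref{assumption:3} and the càdlàg property.
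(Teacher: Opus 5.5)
Your skeleton (expand the square, kill the cross terms by conditioning, use \Cref{assumption:8} to discard the extra framework variables, then read off the minimisation) is what the paper intends; it only points to \Cref{assumption:1,assumption:8} and $L^2$-optimality. However, the on-spot step does not go through as you argue it. There $\xi^i_j$ and $M^i_j\odot X_{t_i,\xi^i_j}$ are themselves generators of $\mathcal{A}_{t_i}$, so $\hat X_{t_i}$ is \emph{not} independent of $\xi^i_j$. Evaluating the $\mu_\Xi$-a.e.\ defined object $\hat X_{t_i}$ at $\xi^i_j$ therefore depends on the representative. For atomless $\mu_\Xi$, replace a representative $h(\omega,z)$ of $\hat X_{t_i}$ by $h(\omega,z)+c\,\mathds{1}_{\{z=\xi^i_j(\omega)\}}$. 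This leaves $\hat X_{t_i}$ unchanged as an $\mathcal{H}$-valued random variable but shifts $\hat X_{t_i,\xi^i_j}$ by $c$, and the on-spot identity then fails: with $\eta$ the unshifted, data-reproducing version, the left side is $0$ while the right side becomes strictly positive. So neither the independence of $\xi^i_j$ from $X$ nor the product-space test you propose can yield $\E[X_{t_i,\xi^i_j}\mid\mathcal{G}]=\hat X_{t_i,\xi^i_j}$, because the $\mathcal{A}_{t_i}$-events involve $X_{t_i,\xi^i_j}$ and nothing factors in $\xi^i_j$. Note also that, since $M^i_j\odot X_{t_i,\xi^i_j}$ is $\mathcal{A}_{t_i}$-measurable, your reduced claim is literally $M^i_j\odot\hat X_{t_i,\xi^i_j}=M^i_j\odot X_{t_i,\xi^i_j}$. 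Observing that the difference is $\mathcal{G}$-measurable is not enough; it must vanish.

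The missing idea is that this equality is a \emph{choice} of representative, namely the one reproducing the observed values. It is in fact equivalent to the on-spot identity holding for all $\eta$: taking $\eta$ to be the data-reproducing version turns the cross term into $-\E\bigl[\cdots\lvert M^i_j\odot(X_{t_i,\xi^i_j}-\hat X_{t_i,\xi^i_j})\rvert_2^2\bigr]$. Once this representative is fixed, the first on-spot summand and the cross term vanish pathwise, and the second identity is trivial with no conditioning at all. The genuinely probabilistic content is the left-limit identity, and there your conditioning argument is the right one. The points $\xi^i_j$ and masks $M^i_j$ are independent of $X$ and of $\mathcal{A}_{t_i-}$, which makes $\hat X_{t_i-,\xi^i_j}$ and $\eta_{t_i-,\xi^i_j}$ representative-independent and lets Fubini factor out $\xi^i_j$. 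The remaining identification $\hat X_{t_i-}=\E[X_{t_i}\mid\mathcal{A}_{t_i-}]$ from \Cref{assumption:3,assumption:4,assumption:8} is left at the same sketch level as in the paper.
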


\begin{lemma}\label{lemma: expectation weighted sum over t_i terms}
    Let $U\sim U(0,1)$ be a uniform random variable with corresponding probability measure $\mu_U$ and consider a measurable function $\varphi:[0,T]\times\Omega\rightarrow\mathbb{R}$, such that for all $t \in[0,T]$ $\varphi(t)$ is $\mathcal{F}$-measurable. Then, if $U$ is independent of $\mathcal{F}$, we have for $\bar t :=\sum_{i=0}^{n-1}\mathds{1}_{\left(\frac{i}{2n}, \frac{i+1}{2n}\right]}\left(U\right)\,t_i + \mathds{1}_{\left(\frac{n+i}{2n}, \frac{n+i+1}{2n}\right]}\left(U\right)\,t_{i-1}$ that

    \begin{equation}
        \E_{\mu_U\times \P}\left[\varphi(\bar t)\right] = \frac{1}{2}\left(\E_\P\left[\frac{1}{n}\sum_{i=0}^{n-1}\varphi(t_i)\right] +\E_\P\left[\frac{1}{n}\sum_{i=0}^{n-1}\varphi(t_{i-1})\right]\right)
    \end{equation}

    where $\mu_U\times\P$ is the product measure.
\end{lemma}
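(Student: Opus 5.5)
The proof is a direct computation: condition on $\mathcal{F}$, average over $U$, and then take expectation under $\P$. Because $U$ is independent of $\mathcal{F}$ and all of $n$, the $t_i$ and $\varphi(t)$ are $\mathcal{F}$-measurable, Fubini's theorem on the product space lets us write
\begin{equation*}
\E_{\mu_U\times\P}\left[\varphi(\bar t)\right] = \E_\P\left[\int_0^1 \varphi(\bar t(u,\cdot))\, du\right].
\end{equation*}
For Fubini to apply, $\varphi(\bar t)$ must be jointly measurable. We assume $\varphi$ is nonnegative or suitably integrable; otherwise we apply the argument to $\varphi^\pm$ separately. Joint measurability holds because $\bar t$ is a measurable function of $(U,\omega)$, being a countable sum of indicator functions times random times, and $\varphi$ is jointly measurable.

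Fix $\omega$ and write $n=n(\omega)$. The intervals $\left(\frac{i}{2n},\frac{i+1}{2n}\right]$ and $\left(\frac{n+i}{2n},\frac{n+i+1}{2n}\right]$ for $i=0,\dots,n-1$ are disjoint and partition $(0,1]$. So for a.e.\ $u$ exactly one indicator is active, and $\bar t(u,\omega)$ is piecewise constant in $u$. It equals $t_i(\omega)$ on the $i$-th interval of the first half and $t_{i-1}(\omega)$ on the $i$-th interval of the second half. Each interval has Lebesgue measure $\frac{1}{2n}$, which gives
\begin{equation*}
\int_0^1 \varphi(\bar t(u,\omega))\,du = \frac{1}{2n}\sum_{i=0}^{n-1}\varphi(t_i) + \frac{1}{2n}\sum_{i=0}^{n-1}\varphi(t_{i-1}).
\end{equation*}
Here $t_{-1}$ is interpreted as $t_{0-}$, as in Assumption~\ref{assumption:4}. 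Taking $\E_\P$ and using linearity, splitting the two sums when each is integrable, gives the stated identity.

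There are two points to handle carefully. First, $\varphi(\bar t)$ must be well defined at the boundary. The half-open intervals cover $(0,1]$, and the event $\{U=0\}$ is null, so nothing is lost there. Second, Fubini/Tonelli must be justified. I would first prove the identity for $\varphi\ge 0$ via Tonelli, where no integrability is needed. I would then extend it to integrable $\varphi$ by decomposing into positive and negative parts, noting that finiteness of one side is equivalent to finiteness of the other. In the application the relevant $\varphi$ is a squared norm, so the nonnegative case already suffices.
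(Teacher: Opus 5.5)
Your proposal is correct and follows the paper's own proof. Both use Fubini to write the left side as $\E_\P\bigl[\int_0^1 \varphi(\bar t(u,\cdot))\,du\bigr]$, then evaluate the inner integral piecewise on the partition of $(0,1]$ into $2n$ intervals of length $\frac{1}{2n}$. Your extra care (joint measurability of $\varphi(\bar t)$, Tonelli for $\varphi\ge 0$, then splitting into $\varphi^\pm$) fills in details the paper leaves implicit under ``by Fubini's theorem''.
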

\begin{proof}
    \begin{align*}
        \E_{\mu_U\times \P}\left[\varphi(\bar t)\right] &= \E_\P\left[\int_0^1\varphi(\sum_{i=0}^{n-1}\mathds{1}_{\left(\frac{i}{2n}, \frac{i+1}{2n}\right]}\left(u\right)\,t_i + \mathds{1}_{\left(\frac{n+i}{2n}, \frac{n+1+i}{2n}\right]}\left(u\right)\,t_{i-1})du\right] \\
        &=\E_\P\left[\sum_{i=0}^{n-1}\int_{\frac{i}{2n}}^{\frac{i+1}{2n}}\varphi(t_i)du + \int_{\frac{n+i}{2n}}^{\frac{n+1+i}{2n}}\varphi(t_{i-1})du\right] \\
        &=\E_\P\left[\sum_{i=0}^{n-1}\frac{1}{2n}\left(\varphi(t_i) +\varphi(t_{i-1})\right)\right] \\
        &=\frac{1}{2}\left(\E_\P\left[\frac{1}{n}\sum_{i=0}^{n-1}\varphi(t_i)\right] +\E_\P\left[\frac{1}{n}\sum_{i=0}^{n-1}\varphi(t_{i-1})\right]\right)
    \end{align*}
by Fubini's theorem.
\end{proof}

Now, we may prove the main convergence result, which proceeds in three steps. First, we establish that the conditional expectation is the unique minimizer of the objective function, up to indistinguishability. Second, we show that the O-NJ-ODE can approximate the conditional expectation by universal approximation arguments. Finally, we prove that this convergence holds with respect to the metric $d_k$.

\begin{proof}[Proof of Theorem~\ref{thm:1}]
\textbf{Step 1:} We first show that $\hat{X} \in \mathbb{D}$ is the unique minimizer of $\Psi$ up to indistinguishability as defined in Definition~\ref{def:indistinguishability}. Therefore, by \Cref{lemma: L2 optimality of conditional expectation},
\begin{equation*}
\begin{split}
\Psi(\hat{X}) 
	&= \E\left[\frac{1}{n}\sum_{i=0}^{n-1} \frac{1}{J_i} \sum_{j=1}^{J_i} \left\lvert M_j^{i} \odot (X_{t_i, \xi_j^i} - \hat{X}_{t_i, \xi_j^i} ) \right\rvert_2^2 + \left\lvert M_j^{i} \odot (X_{t_i, \xi_j^i} - \hat{X}_{t_i-, \xi_j^i} ) \right\rvert_2^2\right] \\
	&= \min_{\eta \in \mathbb{D}} \E\left[\frac{1}{n}\sum_{i=0}^{n-1} \frac{1}{J_i} \sum_{j=1}^{J_i} \left\lvert M_j^{i} \odot (X_{t_i, \xi_j^i} - \eta_{t_i,\xi_j^i}) \right \rvert_2^2\right] + \min_{\eta \in \mathbb{D}} \E\left[\frac{1}{n}\sum_{i=0}^{n-1} \frac{1}{J_i} \sum_{j=1}^{J_i} \left\lvert M_j^{i} \odot (X_{t_i, \xi_j^i} - \eta_{t_i-,\xi_j^i}) \right \rvert_2^2\right] \\
	&\leq \min_{\eta \in \mathbb{D}} \E\left[\frac{1}{n}\sum_{i=0}^{n-1} \frac{1}{J_i} \sum_{j=1}^{J_i} \left( \left\lvert M_j^{i} \odot (X_{t_i, \xi_j^i} - \eta_{t_i,\xi_j^i}) \right \rvert_2^2 +\left\lvert M_j^{i} \odot (X_{t_i, \xi_j^i} - \eta_{t_i-,\xi_j^i}) \right \rvert_2^2 \right)\right] \\
	&= \min_{\eta \in \mathbb{D}}\Psi(\eta),
\end{split}
\end{equation*}
Hence, $\hat{X}$ is a minimizer of $\Psi$, but it remains to show that it is unique up to indistinguishability as defined by the pseudo metric $d_k$ (\ref{equ: pseudo metric}). For a non-indistinguishable process $\eta \in \mathbb{D}$, there exists some $0 \leq k \leq K$ such that $d_k(\hat{X}, \eta) > 0$ and we have

\begin{align}
\Psi(\eta) &= \E\left[\frac{1}{n}\sum_{i=0}^{n-1} \frac{1}{J_i} \sum_{j=1}^{J_i} \left\lvert M_j^{i} \odot (X_{t_i, \xi_j^i} - \eta_{t_i, \xi_j^i} ) \right\rvert_2^2 + \left\lvert M_j^{i} \odot (X_{t_i, \xi_j^i} - \eta_{t_i-, \xi_j^i} ) \right\rvert_2^2\right] \notag\\
	&= \E\left[\frac{1}{n}\sum_{i=0}^{n-1} \frac{1}{J_i} \sum_{j=1}^{J_i} \left\lvert M_j^{i} \odot (X_{t_i, \xi_j^i} - \eta_{t_i, \xi_j^i} ) \right\rvert_2^2\right] + \E\left[\frac{1}{n}\sum_{i=0}^{n-1} \frac{1}{J_i} \sum_{j=1}^{J_i} \left\lvert M_j^{i} \odot (X_{t_i, \xi_j^i} - \eta_{t_i-, \xi_j^i} ) \right\rvert_2^2\right] \notag\\
	&= \E\left[\frac{1}{n}\sum_{i=0}^{n-1} \frac{1}{J_i} \sum_{j=1}^{J_i} \left\lvert M_j^{i} \odot (X_{t_i, \xi_j^i} - \hat{X}_{t_i,\xi_j^i}) \right \rvert_2^2\right] + \E\left[\frac{1}{n}\sum_{i=0}^{n-1} \frac{1}{J_i} \sum_{j=1}^{J_i} \left\lvert M_j^{i} \odot (\hat{X}_{t_i, \xi_j^i} - \eta_{t_i,\xi_j^i}) \right \rvert_2^2\right]  \notag\\ 
    &+ \, \E\left[\frac{1}{n}\sum_{i=0}^{n-1} \frac{1}{J_i} \sum_{j=1}^{J_i} \left\lvert M_j^{i} \odot (X_{t_i, \xi_j^i} - \hat{X}_{t_i-,\xi_j^i}) \right \rvert_2^2\right] + \E\left[\frac{1}{n}\sum_{i=0}^{n-1} \frac{1}{J_i} \sum_{j=1}^{J_i} \left\lvert M_j^{i} \odot (\hat{X}_{t_i-, \xi_j^i} - \eta_{t_i-,\xi_j^i}) \right \rvert_2^2\right] \notag\\
	&= \Psi(\hat{X})  + \E\left[\frac{1}{n}\sum_{i=0}^{n-1} \frac{1}{J_i} \sum_{j=1}^{J_i} \left\lvert M_j^{i} \odot (\hat{X}_{t_i, \xi_j^i} - \eta_{t_i, \xi_j^i} ) \right\rvert_2^2 + \left\lvert M_j^{i} \odot (\hat{X}_{t_i-, \xi_j^i} - \eta_{t_i-, \xi_j^i} ) \right\rvert_2^2\right] , \label{equ:ST}
\end{align}

where we have used again \Cref{lemma: L2 optimality of conditional expectation} in the third line. Hence, it is enough to show that the second term is greater than $0$, which requires three additional results. These will be applied to bound the second term from below by the distance $d_k(\hat{X},\eta)$.

First, let $c_1 := \E\left[ n \right]^{1/2} \in (0, \infty)$ by \cref{assumption:6}, then the Hölder inequality, together with the fact that $n \geq 1$, yields 
\begin{equation}\label{equ:HI}
\E\left[ \left\lvert \eta \right\rvert_2 \right] 
	= \E\left[ \frac{\sqrt{n}}{\sqrt{n}} \left\lvert \eta \right\rvert_2 \right] 
	\leq c_1 \, \E\left[ \frac{1}{n} \left\lvert \eta \right\rvert_2^2 \right]^{1/2}.
\end{equation}

Equivalence of the $1$-norm and $2$-norm implies that there exist constants $c_{2,1}, c_{2,2} > 0$ such that

\begin{equation}\label{equ:EN}
\begin{split}
 \E\left[ \mathds{1}_{\{n > k\}} \left\lvert M_j^k \odot \left(\hat{X}_{t_k, \xi_j^k} - \eta_{t_k,\xi_j^k}\right)  \right\rvert_2 \right] 
  &\geq c_{2,1} \E\left[ \mathds{1}_{\{n > k\}} \left\lvert M_j^k \odot \left(\hat{X}_{t_k, \xi_j^k} - \eta_{t_k,\xi_j^k}\right)  \right\rvert_1 \right] \\
 \E\left[ \mathds{1}_{\{n > k\}} \left\lvert M_j^k \odot \left(\hat{X}_{t_k-, \xi_j^k} - \eta_{t_k-,\xi_j^k}\right)  \right\rvert_2 \right] &\geq  c_{2,2} \E\left[ \mathds{1}_{\{n > k\}} \left\lvert M_j^k \odot \left(\hat{X}_{t_k-, \xi_j^k} - \eta_{t_k-,\xi_j^k}\right)  \right\rvert_1 \right]
 \end{split}
\end{equation}

where we let $c_2 = \min(c_{2,1}, c_{2,2})$.

By Assumption~\ref{assumption:1} we know that $c_3(k) := \min_{1 \leq i \leq d_X} \P ((M_{j}^k)_i = 1) > 0$ and that $(M_{j}^k)_i$ is independent from $t_k$, $n$, $J_k$, $\xi_j^k$ and $\mathcal{A}_{t_k-}$. Hence, we have for any $0 \leq k \leq K$ that

\begin{multline}\label{eq:removing M}
\E\left[ \mathds{1}_{\{n > k\}} \left\lvert M_j^k \odot ( \hat{X}_{t_k-,\xi_j^k} - \eta_{t_k-,\xi_j^k} ) \right\rvert_1 \right] 
    	= \E\left[ \mathds{1}_{\{n > k\}} \sum_{i=1}^{d_X} (M_j^k)_i \left\lvert  (\hat{X}_{t_k-,\xi_j^k} - \eta_{t_k-,\xi^k_j})_i  \right\rvert \right] \\
	=\sum_{i=1}^{d_X} \E\left[(M_j^k)_i \right] \, \E\left[\mathds{1}_{\{n > k\}}  \left\lvert  (\hat{X}_{t_k-,\xi_j^k} - \eta_{t_k-,\xi_j^k})_i  \right\rvert \right] \\
	\geq c_3(k) \, \E\left[ \mathds{1}_{\{n > k\}} \left\lvert \hat{X}_{t_k-,\xi_j^k} - \eta_{t_k-,\xi_j^k}  \right\rvert_1 \right] ,
\end{multline}

We then sequentially apply the above results to bound the second term of \eqref{equ:ST} from below by a positive quantity,
\begin{align}
 \Psi  (\eta)-\Psi(\hat{X}) &= \E\left[\frac{1}{n}\sum_{i=0}^{n-1} \frac{1}{J_i} \sum_{j=1}^{J_i} \left\lvert M_j^{i} \odot (\hat{X}_{t_i, \xi_j^i} - \eta_{t_i, \xi_j^i} ) \right\rvert_2^2 + \left\lvert M_j^{i} \odot (\hat{X}_{t_i-, \xi_j^i} - \eta_{t_i-, \xi_j^i} ) \right\rvert_2^2\right] \notag \\
	&= \E\left[\frac{1}{n}\sum_{i=0}^K \mathds{1}_{\{n > i\}}  \frac{1}{J_i} \sum_{j=1}^{J_i} \left\lvert M_j^{i} \odot (\hat{X}_{t_i, \xi_j^i} - \eta_{t_i, \xi_j^i} ) \right\rvert_2^2 + \left\lvert M_j^{i} \odot (\hat{X}_{t_i-, \xi_j^i} - \eta_{t_i-, \xi_j^i} ) \right\rvert_2^2\right] \notag \\
	&\geq \E\left[\frac{1}{n} \mathds{1}_{\{n > k\}}  \frac{1}{J_k} \sum_{j=1}^{J_k} \left\lvert M_j^{k} \odot (\hat{X}_{t_k, \xi_j^k} - \eta_{t_k, \xi_j^k} ) \right\rvert_2^2 + \left\lvert M_j^{k} \odot (\hat{X}_{t_k-, \xi_j^k} - \eta_{t_k-, \xi_j^k} ) \right\rvert_2^2\right] \notag \\
	&\geq \E\left[\frac{1}{n} \mathds{1}_{\{n > k\}} \left(\left\lvert M_{1}^{k} \odot (\hat{X}_{t_k, \xi_1} - \eta_{t_k, \xi_1} ) \right\rvert_2^2 + \left\lvert M_{1}^{k} \odot (\hat{X}_{t_k-, \xi_1} - \eta_{t_k-, \xi_1} ) \right\rvert_2^2\right)\right]  \notag \\
	&\geq \frac{1}{c_1^2} \left(\E\left[\mathds{1}_{\{n > k\}}\left\lvert M_{1}^k \odot\left(\hat{X}_{t_k,\xi_1} - \eta_{t_k,\xi_1}\right)\right\lvert_2\right]^2 + \E\left[\mathds{1}_{\{n > k\}}\left\lvert M_{1}^k \odot\left(\hat{X}_{t_k-,\xi_1} - \eta_{t_k-,\xi_1}\right)\right\lvert_2\right]^2\right) \notag \\
    &\geq c_2^2\frac{1}{c_1^2} \left(\E\left[\mathds{1}_{\{n > k\}}\left\lvert M_{1}^k \odot\left(\hat{X}_{t_k,\xi_1} - \eta_{t_k,\xi_1}\right)\right\lvert_1\right]^2 + \E\left[\mathds{1}_{\{n > k\}}\left\lvert M_{1}^k \odot\left(\hat{X}_{t_k-,\xi_1} - \eta_{t_k-,\xi_1}\right)\right\lvert_1\right]^2\right) \notag \\
    & \geq \frac{1}{2}\left(\frac{c_2}{c_1}\right)^2 \E\left[\mathds{1}_{\{n > k\}}\left(\left\lvert M_{1}^k \odot\left(\hat{X}_{t_k,\xi_1} - \eta_{t_k,\xi_1}\right)\right\lvert_1 + \left\lvert M_{1}^k \odot\left(\hat{X}_{t_k-,\xi_1} - \eta_{t_k-,\xi_1}\right)\right\lvert_1\right)\right]^2 \notag \\
    &\geq c_3(k)^2\frac{1}{2}\left(\frac{c_2}{c_1}\right)^2 \E\left[\mathds{1}_{\{n > k\}}\left(\left\lvert M_{1}^k \odot\left(\hat{X}_{t_k,\xi_1} - \eta_{t_k,\xi_1}\right)\right\lvert_1 + \left\lvert \hat{X}_{t_k-,\xi_1} - \eta_{t_k-,\xi_1}\right\lvert_1\right)\right]^2 \notag \\
	& = \frac{1}{2}\left( \frac{c_3(k)c_2}{c_0(k) \, c_1} \right)^2 d_k(\hat{X}, \eta)^2 > 0. \label{equ:thm1-positive expectation}
\end{align} 
In the fourth line, \Cref{assumption:7} is applied to eliminate the $J_k$ factor. Although $\eta\in\mathbb{D}$ is $\mathbb{A}$-adapted and depends on the observation collection, the random assignment of the indices $j=1,...,J_k$ means that the $j$-indexed observations are exchangeable conditional on $J_k$. Since $\eta$ and $\hat X$ are also invariant under permutations of the index, all terms of the sum in $j$ have the same conditional expectation. Consequently, the average over spatial points reduces to that of a single random observation point represented by $\xi_1^k$. Then, results follow sequentially by applying Hölder's inequality (\ref{equ:HI}), the equivalence of norms (\ref{equ:EN}), Cauchy's inequality in obtaining the algebraic identity $2a^2 +2b^2 \geq (a+b)^2$, \cref{assumption:6}, equation (\ref{eq:removing M}), the fact that $c_3(k) \leq 1$, and the definition of $d_k$ (\ref{equ: pseudo metric}). As a result, $\Psi(\eta) > \Psi(\hat{X})$, and we have shown uniqueness up to indistinguishability.

\textbf{Step 2:} Next, we show that the O-NJ-ODE \eqref{equ:O-NJ-ODE} can approximate $\hat{X}_{t,\xi}$ arbitrarily well. Our proof relies on applying $L^p$ UAT arguments in approximating the functions $f(t,\xi,O_{[0,\tau(t)]})$ and $F(\tau(t),\xi,O_{[0,\tau(t)]})$, from Assumption~\ref{assumption:4}, by networks $f_{\theta_1}$ and $\rho_{\theta_2}$. The continuous evolution network $f_{\theta_1}$ and the jump network $\rho_{\theta_2}$ should therefore have access to the full history of observations $O_{[0,\tau(t)]}$, but by design, this information is not explicitly provided as input. Instead, the previous hidden state $H_{t-}(\xi)$ must be representative of all previous observations, $O_{[0,\tau(t-)]}$, whereas the generalized kernel $\psi_{\theta_3}(t,\xi,O_{\kappa(t)})$ must be representative of current observations $O_{\kappa(t)}$. In essence, then, the hidden state can be thought of as a storage of past information, where new observations are processed by the generalized kernel and stored sequentially by the jump network. Generally speaking, it must also contain the information necessary for the readout network $g_{\theta_4}$ to extract the approximation of the conditional expectation. 

In this light, the simplest model consists of a hidden state $H_{t-}(\xi)$ that directly stores observations $O_{[0,\tau (t-)]}$ and the approximation of $\hat{X}_{t,\xi}$. We now explore the implications of this for our architecture. At observation times $t \in \{t_1,...,t_n\}$, the jump network must impute current observations $O_{\kappa(t)}$ in the hidden state and approximate $\hat{X}_{t,\xi}$. As for the generalized kernel, it is sufficient to consider its output to be equivalent to the input of the current observations, $O_{\kappa(t)}$. On the other hand, at times $t\in[0,T] \setminus \{t_1,...,t_n\}$ between observations the evolution of $\hat{X}_{t,\xi}$ is captured by the evolution network. With these notions in mind, let $H_t=(\bar H_t, \tilde H_t )$ be a division of the hidden state into two components for the purpose of estimation and storage, respectively. The neural networks are then divided similarly as $f_{\theta_1}=(\bar f_{\theta_1},\tilde f_{\theta_1})$ and $\rho_{\theta_2} = (\bar \rho_{\theta_2},\tilde \rho_{\theta_2})$. While $\bar f$ and $\bar \rho$ learn to estimate $\hat{X}_{t,\xi}$, the components $\tilde f$ and $\tilde \rho$ are concerned with the storage of observations over time. Since information is seen and stored at observation times, the entire $f$ is irrelevant to the task. Thus, $\tilde f_{
\theta_1}=0$, whereas $\tilde \rho_{\theta_2}$ alone handles the storage of observations. Defined to shift down the hidden state to impute new observations at the start of the vector, both $\tilde \rho,\tilde f\in \mathcal{N}$. The readout network is simply a projection onto $\bar H$, so it is also the case that $g_{\theta_4}\in\mathcal{N}$. In contrast, the complementary neural networks $\bar \rho$ and $\bar f$ can only approximate their target functions $F$ and $f$. 

However, to make use of $L^p$ UAT arguments there remains the issue of unbounded observation inputs. To overcome this, the unbounded information collections, $O_{\kappa(t)}$, must be restricted to a finite dimension before being passed to the network. With the notion that retaining more information leads to greater accuracy, this dimension will be inferred from a targeted approximation accuracy $\varepsilon>0$. In this regard, the approach of the proof will be general, truncating observation sets prior to being fed to the O-NJ-ODE through a standard generalized kernel $\tilde \psi_{\theta_3}$, seen as a form of preprocessing. With this approach, $\tilde \psi_{\theta_3}(t,\xi,O_{\kappa(t)}^\varepsilon)\in \mathcal{N}$ is simply a projection on the truncated input, $O_{\kappa(t)}^\varepsilon$. An alternative approach would be to use the locally aggregating kernel (\ref{eq:gen kernel}). At each spatial coordinate, this processes observations of finite dimension $d$, and is therefore capable of implicitly truncating observation sets\footnote{To replicate the implicit truncation, $\varphi^1$ can be defined to output a vector of repeated blocks of the local $d$-dimensional observations, i.e. a vector of length $d\lfloor 1/\varepsilon\rfloor$. Meanwhile, $\varphi^2$ is defined to project the output of $\varphi^1$ into the appropriate $j$-th block. It can do this by multiplying the $i$-th element of the output by the indicator function $\mathds{1}(\lfloor(i-1)/d\rfloor=j-1)$, represented by a linear combination of ReLU activations} and outputting $O_{\kappa(t)}^\varepsilon$. Despite this allowing a local aggregation without prior information loss, the proof proceeds with the former approach for generality. With this model specification, all networks except $\bar \rho$ and $\bar f$ are readily within the class $\mathcal{N}$ and parameterized by neural networks with weights $\theta_1,\theta_2,\theta_3,\theta_4\in\Theta$. Now, with bounded inputs, the existence of arbitrary approximations to $F$ and $f$ follows from the $L^p$ UAT theorem.

Beginning with the jump network, let $\nu_F$ be the push-forward measure of $\nu_F^*=du\times d\mu_\Xi\times d\P$ through the measurable map of the network inputs,

\begin{align}\label{eq:JNinputs}
    [0,1]\times \Xi \times \Omega &\rightarrow \R^{d_H+1+d_\Xi+d_s} \nonumber \\
    (U,\xi, \omega) &\mapsto \left(H_{\bar t-}(\xi, \omega),\bar t(U,\omega), \xi, \psi_{\theta_3}(\bar t(U,\omega), \xi, O_{\kappa(\bar t (U, \omega))}(\omega))\right)
\end{align}

where $\bar t :=t(U,\omega)$ as in \Cref{lemma: expectation weighted sum over t_i terms}. As will be clarified later, this reparameterization necessarily allows the measure $\nu_F$ to represent the sum over $t_i$ and $t_{i-1}$ of $F$ in \cref{assumption:4}. Additionally, $d_H=d_{\bar H}+d_{\tilde H}$ is the dimension of the hidden state and $d_s$ is the output dimension of the generalized kernel. Strictly speaking, only the storage $\tilde H_{\bar t-}(\xi)$ is required as input. However, this can be extended to the entire hidden state $H_{\bar t-}(\xi)$ as the network weights connected to $\bar H_{\bar t-}(\xi)$ can simply be set to zero. Including the previous conditional expectation estimate, $\bar H_{\bar t-}(\xi)$, will also act as a form of memory. Despite being theoretically redundant, this is likely to smooth errors and help with training. 

An issue arises when considering the dimensionality of the data. In general, both $d_H$ and $d_s$ must grow to infinity. This necessity stems from \cref{assumption:6}, which states that the number of observations $n$ and the number of observation points $J_i$ for each $i=0,...,n-1$ are integrable. As a consequence, they are almost surely finite but unbounded. Therefore, the dimensionality of each information set $O_i$ remains unknown until realized. Since the hidden state $H_{t-}(\xi)$ encodes all of $O_{[0,t-]}$ and the generalized kernel $\psi_{\theta_3}(t,\xi,O_{\kappa(t)})$ outputs the observations in $O_{\kappa(t)}$, neither $d_H$ nor $d_s$ can be fixed to a finite value.

In contrast, applying the $L^2(\nu)$ UAT theorem \citep[Theorem~1]{hornik1991approximation} requires a finite measure $\nu$ on a fixed, finite, and real-valued input space $\R^k$. Here, $\nu_F(\R^{d_H+1+d_\Xi+d_s})=1$ is indeed a finite measure, but the approach to the proof must be adjusted to accommodate an infinite-dimensional input. Ultimately, our objective is to show that increasing the O-NJ-ODE complexity enables the approximation of $\hat{X}_{t,\xi}$ with arbitrary accuracy. This motivates an approach in which a sequence of networks, each with progressively greater complexity, is constructed to achieve increasingly accurate $\varepsilon$-approximations. To enforce that inputs are of finite dimension, each network will be defined to approximate the target on a dimension-bounded but growing subspace of the input domain. 

To this end, the input \eqref{eq:JNinputs} is extended to include the current number of additional observations $n_t(\omega)$ and the current maximal discretization size $J_t^*(\omega)=\sup_{0\le i \le n_t(\omega)}J_i(\omega)$, such that

\begin{align}\label{eq:JNinputs extended}
    [0,1]\times \Xi \times \Omega &\rightarrow \R^{d_H+1+d_\Xi+d_s+1+1} \nonumber \\
    (U,\xi, \omega) &\mapsto \left(H_{\bar t-}(\xi, \omega),\bar t(U,\omega), \xi, \psi_{\theta_3}(\bar t(U,\omega), \xi, O_{\kappa(\bar t (U, \omega))}(\omega)),n_{\bar t}(\omega),J^*_{\bar t}(\omega)\right).
\end{align}

Following the suggested approach, it is now possible to focus our attention on a subset of the input space where $n$ and $J^*$ are bounded. For a fixed $\varepsilon>0$, define the $\varepsilon$-bounded subspace $A_\varepsilon=\R^{d_H}\times[0,T]\times\R^{d_{\Xi}}\times\R^{d_s}\times[0,1/\varepsilon]\times[0,1/\varepsilon]$, such that both $n\le\lfloor1/\varepsilon\rfloor$ and $J^*\le \lfloor1/\varepsilon\rfloor$, with a bound $1/\varepsilon$ that appropriately increases with precision. As a consequence, the maximal values of $n$ and $J^*$ on $A_\varepsilon$ are $\lfloor 1/\varepsilon \rfloor$. This immediately implies that any collection of information $O_i$, for $i=0,...,n-1$, can be embedded in a vector of dimension $d_s(\varepsilon)=d\lfloor1/\varepsilon\rfloor$, and that by extension, all of them can be embedded in a vector of dimension $d_H(\varepsilon)=d\lfloor1/\varepsilon\rfloor^2$. These are the precise necessities of the generalized kernel and hidden state. For a given $\varepsilon>0$, it is therefore known a priori that $d_s(\varepsilon)$ and $d_H(\varepsilon)$ are valid and importantly finite dimensions of the generalized kernel and hidden state. On $A_\varepsilon$, no information of the observations will be lost with this choice of dimension. In this case, a valid choice of input space for $A_\varepsilon$ would be $\R^k$, where the original infinite-dimensional inputs are truncated to dimension $k=d\lfloor1/\varepsilon\rfloor^2+1+d_\Xi+d\lfloor1/\varepsilon\rfloor + 1 + 1$, which is both fixed and finite. This support encapsulates all of $A_\varepsilon$ because any potential set of realizations is an embedding of this space. In particular, it is compatible with the $L^p$ UAT theorem, which enables $L^2$-approximation on $A_\varepsilon$. These observations indicate that the inadmissible approximation of the  target $F$, on the entire domain $A=A_\varepsilon \cup A_\varepsilon^c$, should be reformulated in terms of one on the $\varepsilon$-bounded subspace $A_\varepsilon$. 

Focusing on an arbitrary observation collection $O_{[0,t]}$ up to observation time $t$ and continuing with a fixed $\epsilon>0$, let $F(t,\xi,O_{[0,t]})=F(O_{[0,t]})\cdot\mathds{1}_{A_\varepsilon}+F(O_{[0,t]})\cdot\mathds{1}_{A_\varepsilon^c}$. Then, the restriction of $F$ on $A_\varepsilon$ is $\hat{F}:=F|_{A_\varepsilon}:A_\varepsilon\rightarrow\R^{d_X}$, now with a truncated input $O_{[0,t]}^\varepsilon \in \R^{d_H(\varepsilon)}$ representing the fixed and finite dimensional collection of all truncated observations $O_i^\varepsilon$ for $i=0,1,...,\kappa(t)$ up to time $t$. Evaluation of the O-NJ-ODE splits this collection into $(O_{[0,\tau(t-)]}^\varepsilon,O_{\kappa(t)}^\varepsilon))$ when passed to the jump network. The first component of past observations is captured in the hidden state $H_{t-}(\xi)$ and the current time $t$ observations are processed by the generalized kernel. These truncated observations $O_{\kappa(t)}^\varepsilon\in \R^{d_s(\varepsilon)}$ represent a padded projection into the fixed and finite dimension $d_s(\varepsilon)$. More specifically, if the original collection of information $O_{\kappa(t)}$ contains fewer than $d_s(\varepsilon)$ observations, $O_{\kappa(t)}^\varepsilon$ will consist of all those observations in addition to the following trailing zeros, up to dimension $d_s(\varepsilon)$. On the other hand, if $O_{\kappa(t)}$ contains more observations than $d_s(\varepsilon)$, $O_{\kappa(t)}^\varepsilon$ will consist only of observations up to dimension $d_s(\varepsilon)$. Thus, we achieve the desired $L^p$ UAT compatibility on $A_\varepsilon$, and \citet[Theorem~1]{hornik1991approximation} implies that there exists $m_2(\varepsilon)=m_2\in\N$ and network weights $\theta_2^*\in \Theta_{m_2}^2$ such that 
\begin{equation*}
    \left\|\bar\rho_{\theta_2^*} - \hat{F}(O^\varepsilon_{[0, \tau(\bar t)]})\mathds{1}_{\{n_{\bar t}\leq 1/\varepsilon, J_{\bar t}^*\leq 1/\varepsilon\}} \right\|_{L^2(A_\varepsilon,\,\R^{d_X},\;\nu_F)} \leq \varepsilon.
\end{equation*}
Letting $D_F=[0,1]\times\Xi\times\Omega$, the $L^2$-distance between $F$ and the jump network $\bar \rho_{\theta^*_2}$ can be reformulated as
\begin{align} \label{eq:jump UAT}
    &\left\|\bar\rho_{\theta_2^*}(H_{\bar t-}(\xi),\bar t, \xi, O_{\kappa(\bar t)}^\varepsilon) - F(\bar t, \xi,O_{[0, \tau(\bar t)]})\right\|_{L^2(D_F,\,\R^{d_X},\;d\nu^*_F)} \nonumber \\
    &= \left\|\bar\rho_{\theta_2^*} - \left(F(O_{[0, \tau(\bar t)]})\mathds{1}_{\{n_{\bar t}\leq 1/\varepsilon, J_{\bar t}^*\leq 1/\varepsilon\}} + F(O_{[0, \tau(\bar t)]})(1-\mathds{1}_{\{n_{\bar t}\leq 1/\varepsilon, J_{\bar t}^*\leq 1/\varepsilon\}})\right) \right\|_{L^2(D_F,\,\R^{d_X},\;d\nu_F^*)} \nonumber \\
    & \le \left\|\bar\rho_{\theta_2^*} - F(O_{[0, \tau(\bar t)]})\mathds{1}_{\{n_{\bar t}\leq 1/\varepsilon, J_{\bar t}^*\leq 1/\varepsilon\}} \right\|_{L^2(D_F,\,\nu_F^*)} + \left\| F(O_{[0, \tau(\bar t)]})(1-\mathds{1}_{\{n_{\bar t}\leq 1/\varepsilon, J_{\bar t}^*\leq 1/\varepsilon\}}) \right\|_{L^2(D_F,\,\nu_F^*)}\nonumber \\
    & \le \left\|\bar\rho_{\theta_2^*} - \hat{F}(O^\varepsilon_{[0, \tau(\bar t)]})\mathds{1}_{\{n_{\bar t}\leq 1/\varepsilon, J_{\bar t}^*\leq 1/\varepsilon\}} \right\|_{L^2(A_\varepsilon,\,\nu_F)} + \left\| F(O_{{[0, \tau(\bar t)]}})(1-\mathds{1}_{\{n_{\bar t}\leq 1/\varepsilon, J_{\bar t}^*\leq 1/\varepsilon\}}) \right\|_{L^2(D_F,\,\nu_F^*)}\nonumber \\
    & \le \varepsilon + \left\| F(O_{[0, \tau(\bar t)]})(1-\mathds{1}_{\{n_{\bar t}\leq 1/\varepsilon, J_{\bar t}^*\leq 1/\varepsilon\}}) \right\|_{L^2(D_F,\,\nu_F^*)}.
\end{align}
In the fourth inequality we push through the measurable input map (\ref{eq:JNinputs extended}). Its image is contained in the fixed and finite-dimensional Euclidean input space $A_\varepsilon$, on which we can apply the $L^P$ UAT theorem with respect to the pushforward measure. Arbitrary accuracies $\varepsilon>0$ can be obtained by taking the corresponding $A_\varepsilon$ as the approximation domain of the network. However, for the desired approximation to hold with arbitrary accuracy, it remains to show that the second term approaches zero as $\varepsilon$ decreases. Given that $F$ is integrable by the finite expectation assumption (\ref{equ:assumption4 bound}) and that the complement of the $\varepsilon$-bounded input space $A_\varepsilon^c\rightarrow \emptyset$ as $\varepsilon\rightarrow0$, this is indeed the case. 
More precisely, we have 
\begin{equation*}
    \mathds{1}_{A_\varepsilon^c} \leq \mathds{1}_{\{n \geq 1/\varepsilon\}} + \mathds{1}_{\{J^* \geq 1/\varepsilon\}} \xrightarrow[\varepsilon \to 0]{\P-a.s.} 0,
\end{equation*}
since $n, J^*$ are integrable by \Cref{assumption:6}.

Moving on to the continuous evolution network, we follow the same approach, letting $\nu_f$ be the finite push-forward measure of $\nu_f^* =dt\times d\mu_{\xi}\times d\P$ defined by the measurable map to the network inputs,

\begin{align}
    [0,T]\times \Xi \times \Omega &\rightarrow \R^{d_H+1+1+d_\Xi+1+1} \nonumber \\
    (t,\xi, \omega) &\mapsto \left(H_{t-}(\xi, \omega), t, \tau(\omega,t), \xi, n_t(\omega), J_t^*(\omega)\right).
\end{align}

As before, the input has been extended to include $n_t(\omega)$ and $J^*_t(\omega)$, and the measure $\nu_f$ is finite with $\nu_f(\R^{d_H+1+1+d_\Xi+1+1})=T$. For compatibility with the $L^p$ UAT theorem, the $\varepsilon$-bounded input space is defined as $B_{\epsilon}=\R^{d_H}\times[0,T]^2\times\Xi\times[0,1/\varepsilon]\times[0, 1/\varepsilon]$, and again we partition the support as $B=B_\varepsilon\cup B_\varepsilon^c$ to reformulate the $L^2$-approximation of $f$ in terms of the UAT-compatible space $B_\varepsilon$. 
Here, the restriction of $f$ on $B_\varepsilon$ is defined as $\hat{f}:=f|_{B_\varepsilon}:B_\varepsilon\rightarrow \R^{d_X}$. This function takes the truncated input $O_{[0,\tau(t)]}^\varepsilon\in\R^{d_H(\varepsilon)}$, which is the projection of $O_{[0,\tau(t)]}$ on its first $d_H(\varepsilon)$ elements. For the network, the informational equivalent is $\tilde H_{t-}^\varepsilon\in\R^{d_H(\varepsilon)}$, which denotes the information accumulated by the jump network over time. For a particular $\omega\in\Omega$, $\tilde H_{t-}^\varepsilon$ is equal to $O_{[0,\tau(t)]}^\varepsilon\in\R^{d_H(\varepsilon)}$. 
Then, for any $\epsilon >0$, there exists an $m_1(\varepsilon)=m_1 \in\N$ with network weights $\theta_1^* \in \Theta_{m_1}^1$ such that 
\begin{equation*}
    \left\|\bar f_{\theta_1^*} - \hat{f}(O_{[0, \tau(t)]}^\varepsilon)\mathds{1}_{B_\varepsilon}\right\|_{L^2(B_\varepsilon,\,\R^{d_X},\;\nu_f)} \leq \varepsilon.
\end{equation*}
Hence, with $D_f=[0,T]\times\Xi \times \Omega$ we can bound the approximation of $f$ as
\begin{align} \label{eq:evol UAT}
    &\left\|\bar f_{\theta_1^*}\left(H_{t-}^\varepsilon(\xi), t, \tau(t), \xi\right) - f\left(t, \xi, O_{[0, \tau(t)]}\right) \right\|_{L^2(D_f,\,\R^{d_X};\;\nu^*_f)} \nonumber \\
    &= \left\|\bar f_{\theta_1^*} - \left(f(O_{[0, \tau(t)]})\mathds{1}_{\{n_{\bar t}\leq 1/\varepsilon, J_{\bar t}^*\leq 1/\varepsilon\}} + f(O_{[0, \tau(t)]})(1-\mathds{1}_{\{n_{\bar t}\leq 1/\varepsilon, J_{\bar t}^*\leq 1/\varepsilon\}})\right)\right\|_{L^2(D_f,\,\R^{d_X};\;\nu^*_f)} \nonumber \\
    &\le \left\|\bar f_{\theta_1^*} - f(O_{[0, \tau(t)]})\mathds{1}_{\{n_{\bar t}\leq 1/\varepsilon, J_{\bar t}^*\leq 1/\varepsilon\}}\right\|_{L^2(D_f,\,\nu_f^*)} + \left\|f(O_{[0, \tau(t)]})(1-\mathds{1}_{\{n_{\bar t}\leq 1/\varepsilon, J_{\bar t}^*\leq 1/\varepsilon\}})\right\|_{L^2(D_f,\,\nu_f^*)} \nonumber \\
    &\le \left\|\bar f_{\theta_1^*} - \hat{f}(O_{[0, \tau(t)]}^\varepsilon)\mathds{1}_{\{n_{\bar t}\leq 1/\varepsilon, J_{\bar t}^*\leq 1/\varepsilon\}}\right\|_{L^2(B_\varepsilon,\,\nu_f)} + \left\|f(O_{[0, \tau(t)]})(1-\mathds{1}_{\{n_{\bar t}\leq 1/\varepsilon, J_{\bar t}^*\leq 1/\varepsilon\}})\right\|_{L^2(D_f,\,\nu_f^*)} \nonumber \\
    &\le \varepsilon + \left\|f(O_{[0, \tau(t)]})(1-\mathds{1}_{\{n_{\bar t}\leq 1/\varepsilon, J_{\bar t}^*\leq 1/\varepsilon\}})\right\|_{L^2(D_f,\,\nu_f^*)}.
\end{align}
As was the case for the jump network, the integrability of $f$ and the vanishing behavior of $B_\varepsilon^c$ for decreasing $\varepsilon$ imply that $f$ can be approximated arbitrarily well. Indeed, the part of $f$ that contributes to the second term of (\ref{eq:evol UAT}) becomes negligible, since

\begin{equation*}
    \mathds{1}_{B_\varepsilon^c} \leq \mathds{1}_{\{n \geq 1/\varepsilon\}} + \mathds{1}_{\{J^* \geq 1/\varepsilon\}} \xrightarrow[\varepsilon \to 0]{\P-a.s.} 0,
\end{equation*}
by integrability of $n, J^*$ from \Cref{assumption:6}.

In the limit of finer approximations, the $\varepsilon$-bounded subspaces $A_\varepsilon$ and $B_\varepsilon$ will approach the entirety of their support, and the error on their vanishing complements will be negligible. Having established the existence of arbitrary $\varepsilon$-approximations for both $F$ and $f$, we may progress towards formalizing the O-NJ-ODE approximation of the conditional expectation, $\hat{X}_{t,\xi}$, in regard to minimizing the objective function (\ref{equ:Phi}). 

So far, we have considered approximations of $F$ and $f$ on UAT-compatible domains up to a fixed accuracy, $\varepsilon>0$, and extended these results to approximations on all their supports under specified constraints on complexity. In both cases, attaining lower $\varepsilon$-approximations requires a more complex network. As accuracy increases, the $1/\varepsilon$ bounds that define $A_\varepsilon$ and $B_\varepsilon$ approach infinity, as do the dimensions $d_H(\varepsilon)$ and $d_s(\varepsilon)$. Better approximations require more memory, both in storing old observations and in processing new ones. Reducing the approximation error also requires a decrease in network bias. Networks of greater depth or width allow for better approximations. For example, newly introduced weights can simply be set to zero to obtain previous approximations. Hence, requiring a lower $\varepsilon$ error will result in an increase in the complexities $m_1(\varepsilon)$ and $m_2(\varepsilon)$. Given these inverse relationships, we formalize the convergence in error by the scaling complexity of the network. Letting $m=\max(m_1,m_2,d_H,d_s)$ quantify the complexity of our network, the best achievable approximation error for complexity $m$ can be denoted by $\varepsilon_m$. The 2-norms of the network weights are included in this definition for later use in proving \Cref{thm:MC convergence Yt}. By the previous arguments, $m\rightarrow\infty$ allows the complexity of the O-NJ-ODE to increase and there will be a concurrent decrease in the attainable approximation error such that $\varepsilon_m\rightarrow0$. Now, it remains to show that the O-NJ-ODE \eqref{equ:O-NJ-ODE}, trained with respect to the objective function \eqref{equ:Phi}, is indeed capable of arbitrarily approximating $\hat{X}_{t,\xi}$ in this limit.

To begin, it will be shown that the loss of the approximating network $\Phi(\theta^*_m)$ converges to the global minimum $\Psi(\hat{X}_{t,\xi})$ as complexity $m\rightarrow\infty$. Here, $\theta_m^*=(\theta_1^*, \theta_2^*, \theta_3, \theta_4) \in \Theta_m$, and by \Cref{lemma: L2 optimality of conditional expectation} the objective function can be reformulated as

\begin{equation} \label{eq:deviation term}
\Phi(\theta^*_m) = \Psi(\hat{X}) +\E\left[\frac{1}{n}\sum_{i=0}^{n-1}\frac{1}{J_i}\sum_{j=1}^{J_i}\left|M_j^i\odot\left(\hat{X}_{t_i,\xi_j^i}-Y_{t_i}^{\theta^*_m}(\xi_j^i)\right)\right|_2^2 + \left|M_j^i\odot\left(\hat{X}_{t_i-,\xi_j^i}-Y_{t_i-}^{\theta_m^*}(\xi_j^i)\right)\right|_2^2\right].
\end{equation}

Minimizing the objective is therefore equivalent to showing that this second term, henceforth referred to as the deviation term, approaches zero. Thus, we bound the $L^2$-distance between the O-NJ-ODE output $Y_t^{\theta^*_m}(\xi)=Y_{t,\xi}^{\theta_m^*}$ and $\hat{X}_{t,\xi}$. For observation times $t\in\{t_1,...,t_n\}\subset [0,T]$ and spatial coordinate $\xi\in\Xi$, the distance simplifies to 

\begin{align}\label{eq:jump inequality}
    \left|Y_{t,\xi}^{\theta_m^*}-\hat{X}_{t,\xi}\right|_2 = \left|\bar\rho_{\theta_2^*}(H_{t-}(\xi), t,\xi,\psi_{\theta_3^*}(t,\xi,O_{\kappa(t)}))-F(t,\xi,O_{[0, \tau(t)]})\right|_2.
\end{align}

For the remaining times, $t\in [0,T] /\{t_1,...,t_n\}$, where both the jump and evolution networks are relevant, the distance can be reformulated as

\begin{align}\label{eq:evol inequality}
    \left|Y_{t,\xi}^{\theta_m^*}-\hat{X}_{t,\xi}\right|_2 &= \left|Y_{\tau(t)}^{\theta_m^*}-\hat{X}_{\tau(t)} +\int_{\tau(t)}^t\left(\bar f_{\theta_1^*}\left(H_{s-}(\xi), s, \tau(t), \xi\right)-f\left(s, \xi, O_{[0, \tau(t)]}\right)\right)ds \right|_2 \nonumber \\
    &\le \left|Y_{\tau(t), \xi}^{\theta_m^*}-\hat{X}_{\tau(t), \xi}\right|_2 + \int_{\tau(t)}^t\left|\bar f_{\theta_1^*}-f\right|_2 ds \nonumber\\
    &\le \left|\bar\rho_{\theta^*_2}(\tau(t),\xi)-F(\tau(t),\xi)\right|_2 + \int_{0}^T\left|\bar f_{\theta_1^*}\left(H_{t-}(\xi),t, \tau(t), \xi\right)-f\left(t,\xi,O_{[0,\tau(t)]}\right)\right|_2 dt
\end{align}

by applying the triangle inequality, standard integral results, and substituting the inequality obtained at jump times (\ref{eq:jump inequality}). For brevity, only time and space inputs are explicitly written for the jump part. Note that this final inequality also holds for any $t\in[0,T]$.

A structural similarity is already apparent between inequalities (\ref{eq:jump inequality}, \ref{eq:evol inequality}) and the $L^2$-norms with respect to the pushforward measures $\nu_F$ and $\nu_f$. Neglecting the mask $M_j^i$ in the deviation term of (\ref{eq:deviation term}) and substituting for the inequalities yields an expression in terms of these norms that is amenable to convergence analysis. In the following, the networks $\bar \rho_{\theta_2^*}$ and $\bar f_{\theta_1^*}$ are taken to approximate their targets with error $\varepsilon_m>0$, so that

\begin{align} \label{eq:DCT loss convergence}
    &\E\left[\frac{1}{n}\sum_{i=0}^{n-1}\frac{1}{J_i}\sum_{j=1}^{J_i}\left|M_j^i\odot\left(\hat{X}_{t_i,\xi_j^i}-Y_{t_i}^{\theta^*_m}(\xi_j^i)\right)\right|_2^2\right] + \E\left[\frac{1}{n}\sum_{i=0}^{n-1}\frac{1}{J_i}\sum_{j=1}^{J_i}\left|M_j^i\odot\left(\hat{X}_{t_i-,\xi_j^i}-Y_{t_i-}^{\theta_m^*}(\xi_j^i)\right)\right|_2^2\right] \nonumber \\
    &\quad\;\;\le \E\left[\frac{1}{n}\sum_{i=0}^{n-1}\frac{1}{J_i}\sum_{j=1}^{J_i}\left|Y_{t_i}^{\theta^*_m}(\xi_j^i)-\hat{X}_{t_i,\xi_j^i}\right|_2^2\right] + \E\left[\frac{1}{n}\sum_{i=0}^{n-1}\frac{1}{J_i}\sum_{j=1}^{J_i}\left|Y_{t_i-}^{\theta_m^*}(\xi_j^i)-\hat{X}_{t_i-,\xi_j^i}\right|_2^2\right]  \nonumber\\
    &\quad\;\;\le \E\left[\frac{1}{n}\sum_{i=0}^{n-1}\frac{1}{J_i}\sum_{j=1}^{J_i}\left|[\bar\rho_{\theta_2^*}-F](t_i,\xi_j^i) \right|_2^2 + \left(\left|[\bar\rho_{\theta_2^*}-F](t_{i-1},\xi_j^i)\right|_2 + \int_0^T\left|[\bar f_{\theta_1^*}-f](t,\xi_j^i)\right|_2dt\right)^2\right]  \nonumber\\
    &\quad\;\; \le 2\left(\E\left[\frac{1}{n}\sum_{i=0}^{n-1}\frac{1}{J_i}\sum_{j=1}^{J_i}\left|[\bar\rho_{\theta_2^*}-F](t_i,\xi_j^i) \right|_2^2 +\left|[\bar\rho_{\theta_2^*}-F](t_{i-1},\xi_j^i) \right|_2^2 + \left( \int_0^T\left|[\bar f_{\theta_1^*}-f](t,\xi_j^i)\right|_2dt\right)^2\right]\right)  \nonumber\\
    &\quad\;\;\le 2\left(\E_{\nu_F}\left[\left|[\bar\rho_{\theta_2^*}-F](\bar t,\xi)\right|_2^2\right] + T\E_{\nu_f}\left[\left|[\bar f_{\theta_1^*}-f](\xi)\right|_2^2\right]\right)  \nonumber\\
    &\quad\;\;= 2\left(\left\|[\bar\rho_{\theta_2^*}-F](\bar t,\xi)\right\|^2_{L^2(A,\nu_F)}+T\left\|[\bar f_{\theta_1^*}-f](\xi)\right\|^2_{L^2(B,\nu_f)}\right)  \nonumber\\
    &\quad\;\;\le 2\left( O(\varepsilon_m^2) + \left\| F(O_{[0,\tau(\bar t)]})\mathds{1}_{A_{\varepsilon_m}^c} \right\|^2_{L^2(A,\nu_F)} + T\left\|f(O_{[0, \tau(t)]})\mathds{1}_{B_{\varepsilon_m}^c}\right\|^2_{L^2(B,\nu_f)}\right).
\end{align}

Inequalities (\ref{eq:jump inequality}, \ref{eq:evol inequality}) are substituted in the second inequality, and applying the Cauchy-Schwarz inequality yields the third. The fourth inequality utilizes the definitions of the push-forward measures $\nu_F$ and $\nu_f$, \cref{assumption:4} for integrability of terms in $F$ and $f$, and \cref{assumption:7} which states that all $\xi_j^i$ are i.i.d. copies of $\xi\sim\mu_{\Xi}$. After reformulating the expression in terms of the $L^2$-norms, the UAT approximations (\ref{eq:jump UAT}, \ref{eq:evol UAT}) are substituted for the final result. With \cref{assumption:4} and the vanishing behavior of $A_{\varepsilon_m}^c$ and $B_{\varepsilon_m}^c$, as $m\rightarrow\infty$ the terms in the expectation (\ref{eq:DCT loss convergence}) go to zero. Applying the dominated convergence theorem, the limit can be passed outside of the expectation and the deviation term will also approach zero by extension. This establishes the convergence in loss of the O-NJ-ODE output to the conditional expectation.

Next, consider the minimal O-NJ-ODE loss $\Phi(\theta_m^{\text{min}})$ with $\theta_m^{min}\in\argmin_{\theta_m\in\Theta_m}\Phi(\theta_m)$, which is guaranteed to exist as $\Phi(\theta)$ is continuous on the compact set $\Theta_m$. Combining the results (\ref{eq:deviation term}) and (\ref{eq:DCT loss convergence}), the minimal loss can be squeezed as

\begin{align*}
    \min_{\eta\in \mathbb{D}} & \Psi(\eta)\le \Phi(\theta_m^{\text{min}}) \le \Phi(\theta_m^*) \\
    &= \Psi(\hat{X}) +\E\left[\frac{1}{n}\sum_{i=0}^{n-1}\frac{1}{J_i}\sum_{j=1}^{J_i}\left|M_j^i\odot\left(\hat{X}_{t_i,\xi_j^i}-Y_{t_i}^{\theta^*_m}(\xi_j^i)\right)\right|_2^2 + \left|M_j^i\odot\left(\hat{X}_{t_i-,\xi_j^i}-Y_{t_i-}^{\theta_m^*}(\xi_j^i)\right)\right|_2^2\right]  \\
    &\le \Psi(\hat{X}) +\E\left[\frac{1}{n}\sum_{i=0}^{n-1}\frac{1}{J_i}\sum_{j=1}^{J_i}\left|\hat{X}_{t_i,\xi_j^i}-Y_{t_i}^{\theta^*_m}(\xi_j^i)\right|_2^2 + \left|\hat{X}_{t_i-,\xi_j^i}-Y_{t_i-}^{\theta_m^*}(\xi_j^i)\right|_2^2\right] \\
    &\le \Psi(\hat{X}) + 2\left( \varepsilon_m + \left\| F(O_{[0, \tau(\bar t)]})\mathds{1}_{A_{\varepsilon_m}^c} \right\|^2_{L^2(A,\nu_F)} + \left\|f(O_{[0, \tau(t)]})\mathds{1}_{B_{\varepsilon_m}^c}\right\|^2_{L^2(B,\nu_f)}\right)\\
    &\xrightarrow[m\rightarrow\infty]{} \Psi(\hat{X}). \\
\end{align*}

Finally, as $\Psi(\hat{X}) =\min_{\eta\in\mathbb{D}}\Psi(\eta)$, the previous arguments culminate in the result

\begin{equation}
    \min_{\eta\in \mathbb{D}}\Psi(\eta)\le \Phi(\theta_m^{\text{min}}) \le \Phi(\theta_m^*) \xrightarrow[m\rightarrow\infty]{} \min_{\eta\in\mathbb{D}}\Psi(\eta).
\end{equation}

This confirms that by minimizing the objective function, the O-NJ-ODE converges (in loss) to the conditional expectation up to indistinguishability.

\textbf{Step 3:} It remains to prove precisely how the O-NJ-ODE converges to the conditional expectation. To confirm that the convergence is, in fact, with respect to the metrics $d_k$, it will be shown that $\lim_{m\rightarrow\infty}d_k(Y_t^{\theta_m^{min}},\hat{X})=0$. First, we note that the term

\begin{equation}\label{eq:loss diff}
    \Phi(\theta_m^{min})-\Psi(\hat{X}) = \E\left[\frac{1}{n}\sum_{i=0}^{n-1}\frac{1}{J_i}\sum_{j=1}^{J_i}\left|M_j^i\odot\left(\hat{X}_{t_i,\xi_j^i}-Y_{t_i}^{\theta^\text{min}_m}(\xi_j^i)\right)\right|_2^2 + \left|M_j^i\odot\left(\hat{X}_{t_i-,\xi_j^i}-Y_{t_i-}^{\theta_m^\text{min}}(\xi_j^i)\right)\right|_2^2\right]
\end{equation}

from (\ref{eq:deviation term}) approaches zero with increasing complexity. Thus, applying \cref{equ:HI,equ:EN,eq:removing M,eq:loss diff} yields 

\begin{equation} \label{eq: convergence in metric}
    d_k(Y^{\theta_m^{\text{min}}}, \hat{X}) \le \frac{\sqrt{2} \, c_0(k) \, c_1}{ c_2 \, c_3(k)}\left(\Phi(\theta_m^{min})-\Psi(\hat{X})\right)^{1/2} \xrightarrow[]{m\rightarrow\infty} 0,
\end{equation}
which completes the proof.
\end{proof}

\subsection{Convergence of the Monte Carlo Approximation}\label{sec:Convergence of the Monte Carlo approximation}
Assuming now that the complexity $m\in\N$ of the neural network is fixed, we study the convergence of the Monte Carlo approximation (\ref{equ:appr loss function}) when the number of sample paths $N\in\N$ increases. The theorem follows from \citet{heiss2024nonparametric} with changes in notation, establishing almost sure uniform convergence of the MC approximation to the objective function on the compact set $\Theta_m$. It is also shown that there is almost sure convergence in both its minimum value and minimizer. 
\\
\begin{theorem}
\label{thm:MC convergence Yt}
Let $\theta^{\min}_{m,N} \in \Theta^{\min}_{m,N} := \argmin_{\theta \in \Theta_m}\{ \hat\Phi_N(\theta)\}$ for every $m, N \in \N$,  and assume either that both neural networks $f_{\theta_1}$ and $\rho_{\theta_2}$ or the readout network $g_{\theta_4}$ have a bounded activation function. 
Then, for every $m \in \N$, $\P\text{-a.s.}$ 
\begin{equation*}
\hat\Phi_N \xrightarrow{N \to \infty} \Phi \quad \text{uniformly on } \Theta_m.
\end{equation*}
Moreover, for every $m \in \N$, we have a.s.,
\begin{equation*}
\hat\Phi_N(\theta^{\min}_{m,N}) \xrightarrow{N \to \infty} \Phi(\theta^{\min}_{m}) \quad \text{and} \quad \Phi(\theta^{\min}_{m,N}) \xrightarrow{N \to \infty} \Phi(\theta^{\min}_{m}) .
\end{equation*}
In particular, one can define an increasing random sequence $(N_m)_{m \in \N}$ in $\N$ such that for every $0 \leq k \leq K$ we have almost surely that $Y^{\theta_{m, N_m}^{\min}}$ converges to $\hat{X}$ 
in the metric $d_k$  as $m \to \infty$. \\
\end{theorem}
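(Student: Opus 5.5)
The plan is a uniform strong law of large numbers on the compact set $\Theta_m$, following the structure of the corresponding result in \citet{heiss2024nonparametric}. For a single sample path $O^{(l)}$ we write
\[
h(\theta, O^{(l)}) := \frac{1}{n^{(l)}}\sum_{i=0}^{n^{(l)}-1}\frac{1}{J_i^{(l)}}\sum_{j=1}^{J_i^{(l)}}\Bigl(\bigl|M_j^i\odot(X_{t_i,\xi_j^i}-Y^\theta_{t_i,\xi_j^i})\bigr|_2^2+\bigl|M_j^i\odot(X_{t_i,\xi_j^i}-Y^\theta_{t_i-,\xi_j^i})\bigr|_2^2\Bigr)(O^{(l)}).
\]
Then $\hat\Phi_N(\theta)=\frac1N\sum_{l}h(\theta,O^{(l)})$, the summands are i.i.d., and $\E[h(\theta,O)]=\Phi(\theta)$.

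The uniform law of large numbers (e.g.\ \citet[Lemma~2.4]{newey1994large} type, or the version used in \citet{krach2022optimal}) needs two ingredients.

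First, $\theta\mapsto h(\theta,O)$ must be continuous for almost every $O$. The networks $f_{\theta_1},\rho_{\theta_2},\psi_{\theta_3},g_{\theta_4}$ are continuous in their weights and have Lipschitz activations. Between the finitely many jumps, $H_t$ solves an ODE with Lipschitz vector field, so by Gronwall it depends continuously on $\theta$. The jumps are continuous compositions. The self-imputed inputs $\tilde X^j_{t_k}$ depend on $Y_{t_k-}$, which is continuous in $\theta$ by induction over $k$. Since $n$ and the $J_i$ are a.s.\ finite, $h$ is a finite sum of continuous functions.

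Second, we need an integrable envelope $\sup_{\theta\in\Theta_m}h(\theta,O)\le G(O)$ with $\E[G]<\infty$.

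\begin{itemize}
\item If $g_{\theta_4}$ has a bounded activation, then with $|\theta_4|_2\le m$ the output satisfies $|Y^\theta_{t,\xi}|\le C_m$ uniformly. Hence $h\le \frac{4}{n}\sum_i\frac1{J_i}\sum_j(|X_{t_i,\xi^i_j}|_2^2+C_m^2)$, which is integrable by \Cref{assumption:5}.
\item If instead $f_{\theta_1}$ and $\rho_{\theta_2}$ have bounded activations, then every jump resets $H$ to a set bounded by a constant depending only on $m$. The drift is also bounded, so $|H_t|\le C_m(1+T)$. The readout $g_{\theta_4}$ is Lipschitz with constant depending on $m$, so $Y$ is again uniformly bounded and the same envelope works.
\end{itemize}

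This envelope step is the main obstacle. The delicate point is that the self-imputation and the kernel $\psi_{\theta_3}$ (whose inputs include unbounded $X$ values and $J$) must not spoil the bound. In both cases this is handled by the bounded activation sitting downstream of those inputs, so no moment of $\psi$'s input is required.

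With these two ingredients, the uniform SLLN gives $\sup_{\Theta_m}|\hat\Phi_N-\Phi|\to0$ a.s., and $\Phi$ is continuous on $\Theta_m$ by dominated convergence. Then
\[
|\hat\Phi_N(\theta^{\min}_{m,N})-\Phi(\theta^{\min}_m)|\le\sup_{\Theta_m}|\hat\Phi_N-\Phi|,
\]
because the two minima over the same set differ by at most the uniform distance. Similarly,
\[
0\le\Phi(\theta^{\min}_{m,N})-\Phi(\theta^{\min}_m)\le 2\sup_{\Theta_m}|\hat\Phi_N-\Phi|
\]
via $\Phi(\theta^{\min}_{m,N})\le\hat\Phi_N(\theta^{\min}_{m,N})+\sup|\cdot|\le\hat\Phi_N(\theta^{\min}_m)+\sup|\cdot|$.

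For the final claim, on the almost sure event where all countably many convergences hold, we choose $N_m>N_{m-1}$ such that $|\Phi(\theta^{\min}_{m,N_m})-\Phi(\theta^{\min}_m)|\le 1/m$. Then \Cref{thm:1} gives $\Phi(\theta^{\min}_{m,N_m})\to\Psi(\hat X)$. The estimate of Step 3 in the proof of \Cref{thm:1} holds for arbitrary $\theta$, namely
\[
d_k(Y^\theta,\hat X)\le\frac{\sqrt2\,c_0(k)c_1}{c_2c_3(k)}\bigl(\Phi(\theta)-\Psi(\hat X)\bigr)^{1/2}.
\]
Applying it with $\theta=\theta^{\min}_{m,N_m}$ yields $d_k(Y^{\theta^{\min}_{m,N_m}},\hat X)\to0$ a.s. for every $k$.
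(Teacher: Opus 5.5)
Your proof is correct. Its first and last parts coincide with the paper's: the same integrable envelope $4|X_{t_i,\xi^i_j}|_2^2+4C_m^2$ from the bounded activation (integrable by \Cref{assumption:5}), the same uniform strong law on the compact set $\Theta_m$, and the same construction of $N_m$ combined with \Cref{thm:1} and the Step~3 estimate. That estimate indeed holds for every fixed $\theta$, since $Y^\theta\in\mathbb{D}$.

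The genuine difference is how you obtain $\Phi(\theta^{\min}_{m,N})\to\Phi(\theta^{\min}_m)$. The paper uses the argmin part of \Cref{lemma:convergencelocallyuniform} to get $d(\theta^{\min}_{m,N},\Theta^{\min}_m)\to0$ and picks exact minimizers $\hat\theta^{\min}_{m,N}$ nearby. It then transfers closeness of weights into closeness of losses via uniform continuity of $\theta\mapsto Y^\theta_{t,\xi}$ and dominated convergence over an independent test copy $\zeta_0$. Your sandwich $0\le\Phi(\theta^{\min}_{m,N})-\Phi(\theta^{\min}_m)\le 2\sup_{\Theta_m}|\hat\Phi_N-\Phi|$ gets the result from uniform convergence alone. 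It needs no argmin consistency and no test-copy construction, and it bounds the excess population loss explicitly by the uniform deviation. The paper's detour buys extra information the theorem does not require: the trained weights approach the set of population minimizers, and model outputs on any fixed input approach those of an exact minimizer.

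Your continuity argument is also more careful than the paper's one-liner. You use Gronwall between jumps and induction over $k$ for the self-imputed inputs, which require evaluating the model at the other observation points $\xi^k_j$. Your explicit envelope for the case of bounded $f_{\theta_1},\rho_{\theta_2}$ is likewise more careful than the paper's treatment.

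One small point: as stated, Newey--McFadden's Lemma~2.4 gives uniform convergence only in probability. You need $\P$-a.s.\ uniform convergence to build $N_m$ pathwise. Your hypotheses already satisfy the almost sure version (as in \Cref{lemma:convergencelocallyuniform}, or the standard finite-bracketing argument combined with the strong law), so invoke that one.
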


As in \citet{heiss2024nonparametric}, we define the separable Banach space $\mathcal{S} := \ell^1(\R^d) = \{x=(x_i)_{i\in\N} \; \vert \; x_i\in \R^d,\; \|x \|_{\mathcal{S}} < \infty \}$ with the norm $\|x\|_{\mathcal{S}} := \sum_{i \in \N} \| x_i \|_{\ell^1}=\sum_{i,j \in \N}\lvert x_{i,j} \rvert_{2}$ and suitable dimension $d$ of information realized at observation times, the same as before. To construct the MC approximation and define the ex post information setting, the  instantaneous loss and realized observations are defined as

\begin{equation*}
\phi(x,y,z,m) :=  \left\lvert m \odot ( x - y ) \right\rvert_2^2 + \left\lvert m \odot (x - z) \right\rvert_2^2 
\end{equation*}
and $\zeta_l := (\zeta_{0}^{(l)}, \dotsc, \zeta_{n^{(l)}-1}^{(l)}, 0, \dotsc)$, where $\zeta_{k}^{(l)}:=(\zeta_{k,1}^{(l)}, \dotsc,\zeta_{k,J_k^{(l)}}^{(l)}, 0, \dotsc)$ with 
$$\zeta_{k, j}^{(l)} := ((M_j^k)^{(l)}\odot (X_{t_k^{(l)}, (\xi_j^k)^{(l)}})^{(l)}, t_k^{(l)},(\xi^k_j)^{(l)}, J_k^{(l)},(M^k_j)^{(l)}) \in \R^d.$$ 
The random variables $t_k^{(l)}$, $(M_{j}^k)^{(l)}$, $(\xi_j^k)^{(l)}$, $J_k^{(l)}$, and $(M_j^k)^{(l)}\odot (X_{t_k^{(l)}, (\xi_j^k)^{(l)}})^{(l)}$ describe the $l$-th sample of the realized data, as defined in Section \ref{sec:Objective Function}. Henceforth, we let $(X_{t_k^{(l)}, (\xi_j^i)^{(l)}})^{(l)}$ denote the observations with $0$ entries at unobserved coordinates.
Taking $n(\zeta_l) := |\{ t_k^{(l)} | t_k^{(l)} < \infty \}|$, $t(\zeta_{i,j}^{(l)}) = t_i^{(l)}$ and  $J_k(\zeta_l):= |\{ (k,j) | t(\zeta_{k,j}^{(l)}) = t_k^{(l)} \}|$ for $k=0,\dotsc,n(\zeta_l)-1$, we have $\P$-almost surely that $n^{(l)} = n(\zeta_l)$ and $J_k^{(l)}=J_k(\zeta_l)$. Furthermore, we have that $t_k(\zeta_l):= t_{k}^{(l)}$, $X_{t_k,\xi_j^k} (\zeta_l) := X_{t_k^{(l)}, (\xi^k_j)^{(l)}}^{(l)}$, $\xi_j^k(\zeta_l)=(\xi_j^k)^{(l)}$, and $M^k_j(\zeta_l) := (M^k_j)^{(l)}$, where $\zeta_l$ are i.i.d. random variables taking values in $\mathcal{S}$. Writing $Y_{t_k,\xi_j^k}^{\theta}(\zeta_l):=(Y_{t_k^{(l)},(\xi_j^k)^{(l)}}^{\theta})^{(l)}$ to make the dependence of $Y$ on the input and the weight $\theta$ explicit, the sample-wise loss can be defined as

\begin{equation*}
h(\theta, \zeta_l) := \frac{1}{n(\zeta_l)}\sum_{i=0}^{n(\zeta_l)-1}\frac{1}{J_i(\zeta_l)}\sum_{j=1}^{J_i(\zeta_l)}  \phi \left( X_{t_i,\xi_j^i}(\zeta_l), Y^\theta_{t_i,\xi_j^i}(\zeta_l), Y^\theta_{t_i-,\xi_j^i}(\zeta_l), M_j^i(\zeta_l) \right).
\end{equation*}

Before proceeding with the proof, we introduce two core lemmas from \citet{heiss2024nonparametric} for the arguments to follow. 
\begin{lemma}\label{lem:properties for MC conv thm}
Almost surely the random function $\theta\in\Theta_m \mapsto Y_{t,\xi}^{\theta}$ is uniformly continuous for every $t \in [0,T]$.
\end{lemma}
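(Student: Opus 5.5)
Fix $\omega$ outside a null set, so that the realized data are fixed. Then $n(\omega)<\infty$, every $J_i(\omega)<\infty$, and the finitely many observation times $t_0<\dots<t_{n-1}$, points $\xi^i_j$ and masks $M^i_j$ are given numbers. For any evaluation point $\xi$, the output $Y^\theta_{t,\xi}$ is obtained by composing finitely many maps. These are: the initialization $H_{0-}(\xi)=\rho_{\theta_2}(0,0,\xi,0)$; finitely many jump updates $H_{t_i}(\xi)=\rho_{\theta_2}(H_{t_i-}(\xi),t_i,\xi,\psi_{\theta_3}(t_i,\xi,O_i))$, where the self-imputed inputs $\tilde X^j_{t_i}$ depend on $\theta$ only through $Y^\theta_{t_i-}(\xi^i_j)=g_{\theta_4}(H_{t_i-}(\xi^i_j))$; the ODE flows $dH=f_{\theta_1}(H,t,\tau(t),\xi)\,dt$ on each $[t_i,t_{i+1})$; and the readout $g_{\theta_4}$. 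Since $\Theta_m$ is compact, uniform continuity follows once I show that $\theta\mapsto Y^\theta_{t,\xi}$ is continuous on $\Theta_m$ (Heine--Cantor). The plan is to prove continuity of all the relevant hidden states jointly, by induction over the observation intervals.

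The basic ingredient is that a feedforward network $(\theta,x)\mapsto N_\theta(x)$ with Lipschitz activations and widths and depths bounded by $m$ is jointly continuous. It is also Lipschitz in $x$ with a constant $L_m$ that is uniform over the ball $|\theta|_2\le m$: take the product of the weight-norm bounds and activation Lipschitz constants. It is moreover locally Lipschitz in $\theta$, uniformly for $x$ in bounded sets. The averaged kernel $\psi_{\theta_3}$ is a finite sum of such networks (finite because $J_i(\omega)<\infty$), so it has the same properties. Define $\mathcal{P}$ as the finite set consisting of the evaluation point $\xi$ together with all observation points $\xi^i_j$. This set is needed because the jump at $t_i$ for $\xi$ needs $Y_{t_i-}$ at every $\xi^i_j$. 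The claim, by induction on $i$, is that $\theta\mapsto (H^\theta_{t}(p))_{p\in\mathcal{P}}$ is continuous for $t=t_i-$ and for $t=t_i$.

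The base case is $H_{0-}$, which is a network evaluation. The jump step composes continuous maps and is immediate. The flow step on $[t_i,t_{i+1})$, or on $[t_{n-1},T]$, is the main obstacle: I need continuous dependence of ODE solutions on both the parameter and the initial condition. I would use Grönwall. Let $H^\theta$ and $H^{\theta'}$ solve the ODE from initial values $h,h'$. Because $f_{\theta_1}$ is $L_m$-Lipschitz in $H$ uniformly on $\Theta_m^1$, we get
\begin{equation*}
\sup_{s\in[t_i,t]}|H^\theta_s-H^{\theta'}_s|\le e^{L_m T}\Big(|h-h'|+T\sup_{s}\big|f_{\theta_1}(H^\theta_s,s,t_i,p)-f_{\theta_1'}(H^\theta_s,s,t_i,p)\big|\Big).
\end{equation*}
The last supremum is taken over a compact trajectory. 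The same linear growth bound gives that trajectories stay bounded uniformly over $\Theta_m$, so joint continuity of $f$ on compacts makes this supremum tend to $0$ as $\theta'\to\theta$. The bound is uniform in $t$ within the interval, so it passes to the left limit $t_{i+1}-$ and closes the induction.

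Finally, $Y^\theta_{t,\xi}=g_{\theta_4}(H^\theta_t(\xi))$ is a composition of continuous maps, hence continuous on the compact $\Theta_m$ and therefore uniformly continuous. This holds for every $t\in[0,T]$ and for every $\omega$ in the full-measure set where $n$ and the $J_i$ are finite, which is almost surely by Assumption~\ref{assumption:6}. The argument covers $\xi$ as well as every $\xi^i_j$, which are the evaluations used in $h(\theta,\zeta_l)$.
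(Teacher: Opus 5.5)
Your proposal is correct and follows the paper's route: continuity of $\theta\mapsto Y^\theta_{t,\xi}$ from the continuity of the networks, then uniform continuity by Heine--Cantor on the compact set $\Theta_m$. The paper only asserts the continuity step, whereas you actually prove it. In particular, you give the Gr\"onwall estimate for the ODE flow between observations, and you note that the self-imputed inputs force you to carry continuity jointly over the finite set of evaluation points $\{\xi\}\cup\{\xi^i_j\}$.
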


\begin{proof}
Continuity of $\theta\in \Theta_m \mapsto Y_{t,\xi}^{\theta}$ follows from the fact that the activation functions are continuous. Since ${\Theta}_m$ is compact, this extends to uniform continuity using standard arguments.
\end{proof}

\begin{lemma}
\label{lemma:convergencelocallyuniform}
Let $(\zeta_i)_{i \geq 1}$ be a sequence of i.i.d. random variables with values in $\mathcal{S}$ and $h:\mathbb{R}^D\times \mathcal{S}\to \mathbb{R}$ be a measurable function.
Assume that almost surely, the function $\theta\in \mathbb{R}^D \mapsto h(\theta, \zeta_1)$ is continuous and for all $C>0$, $\E(\sup_{|\theta|_2 \leq C}|h(\theta, \zeta_1)|)< + \infty$. Then, $\P\text{-a.s.}$ $f_N:  \mathbb{R}^D \to \R, \theta  \mapsto \frac{1}{N}\sum_{i=1}^{N}h(\theta, \zeta_i)$ converges locally uniformly to the continuous function $f: \mathbb{R}^D \to \R, \theta \mapsto \E(h(\theta, \zeta_1))$, i.e.,
\begin{equation*}
\lim_{N\to\infty} \sup_{|\theta|_2\leq C} \left|\frac{1}{N}\sum_{i=1}^{N}h(\theta, \zeta_i) -   \E(h(\theta, \zeta_1))\right| = 0 \qquad \P\text{-}a.s.
\end{equation*}
Moreover, let $K \subset \R^D$ be compact and define the random variables $v_n := \inf_{x\in K} f_n(x)$. We consider a minimizing sequence of random variables $(x_n)_{n=0}^{\infty}$, given by $f_n(x_n) = \inf_{x\in K} f_n(x) $ and let $v^\star = \inf_{x\in K}f(x)$ and $\mathcal{K}^\star = \{ x\in K: f(x) =v^\star \}$. Then $v_n \to v^\star$ and $d(x_n, \mathcal{K}^\star) \to 0 $ almost surely, where $d(\cdot,\cdot)$ is the standard Euclidean metric on $R^D$.
\end{lemma}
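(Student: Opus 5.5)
The plan is the classical uniform strong law of large numbers argument via a finite $\delta$-net, followed by an elementary argmin-consistency step. Fix $C>0$ and write $B_C:=\{\theta\in\R^D: |\theta|_2\le C\}$ and $G_C(\zeta):=\sup_{\theta\in B_C}|h(\theta,\zeta)|$; by assumption $\E[G_C(\zeta_1)]<\infty$. Since $\theta\mapsto h(\theta,\zeta)$ is continuous a.s., every supremum over a ball equals the supremum over its countably many rational points. Hence all suprema below are measurable, and it suffices to work on a single full-measure event.

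\emph{Step 1 (continuity of $f$).} Let $\theta_k\to\theta$ in $B_C$. Then $h(\theta_k,\zeta_1)\to h(\theta,\zeta_1)$ a.s., and these are dominated by $G_C(\zeta_1)$. Dominated convergence gives $f(\theta_k)\to f(\theta)$.

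\emph{Step 2 (modulus of continuity).} For $\theta\in B_C$ and $\delta>0$ define
\begin{equation*}
w(\theta,\delta,\zeta):=\sup_{\theta'\in B_C,\,|\theta'-\theta|_2<\delta}|h(\theta',\zeta)-h(\theta,\zeta)|.
\end{equation*}
Then $0\le w\le 2G_C$, and $w(\theta,\delta,\zeta_1)\to0$ a.s. as $\delta\downarrow0$ by continuity. Dominated convergence therefore gives $\E[w(\theta,\delta,\zeta_1)]\to0$.

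\emph{Step 3 (finite cover).} Given $\varepsilon>0$, choose for each $\theta\in B_C$ a radius $\delta_\theta$ with $\E[w(\theta,\delta_\theta,\zeta_1)]<\varepsilon$. By compactness, extract finitely many centres $\theta_1,\dots,\theta_r$ whose balls cover $B_C$. Apply the ordinary SLLN to the $2r$ integrable i.i.d. sequences $h(\theta_k,\zeta_i)$ and $w(\theta_k,\delta_k,\zeta_i)$; taking $\varepsilon\in\mathbb{Q}$ keeps the number of null sets countable. For $\theta$ in the $k$-th ball, the triangle inequality gives
\begin{equation*}
|f_N(\theta)-f(\theta)|\le \frac1N\sum_{i=1}^N w(\theta_k,\delta_k,\zeta_i)+|f_N(\theta_k)-f(\theta_k)|+\E[w(\theta_k,\delta_k,\zeta_1)].
\end{equation*}
Here the last term bounds $|f(\theta_k)-f(\theta)|$. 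The limsup of the right-hand side is at most $2\varepsilon$, so letting $\varepsilon\downarrow0$ along rationals proves local uniform convergence a.s.

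\emph{Step 4 (argmin consistency).} For compact $K$, the elementary bound $|v_n-v^\star|\le\sup_K|f_n-f|$ yields $v_n\to v^\star$. Now suppose $d(x_n,\mathcal K^\star)\not\to0$ on some realisation in the good event. Then there is a subsequence with $d(x_{n_k},\mathcal K^\star)\ge\eta>0$, and by compactness it converges to some $x\in K$ with $d(x,\mathcal K^\star)\ge\eta$, because $d(\cdot,\mathcal K^\star)$ is continuous. On the other hand,
\begin{equation*}
f(x_{n_k})\le f_{n_k}(x_{n_k})+\sup_K|f_{n_k}-f| = v_{n_k}+o(1)\to v^\star.
\end{equation*}
Continuity of $f$ then gives $f(x)\le v^\star$, so $x\in\mathcal K^\star$, which is a contradiction.

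The main obstacle is Step 3. There one must control a supremum over an uncountable set while using only countably many SLLN applications, and this is exactly where the integrable envelope and the vanishing expected modulus from Step 2 are essential.
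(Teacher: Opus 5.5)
Your proof is correct, but it follows a different and more elementary route than the paper.

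The paper gives no argument of its own. It implicitly treats $h(\cdot,\zeta_i)$ as i.i.d.\ random elements of the separable Banach space $C(\{|\theta|_2\le C\})$ with the sup-norm. Its envelope hypothesis $\E\|h(\cdot,\zeta_1)\|_\infty<\infty$ then lets it invoke the strong law of large numbers in separable Banach spaces (Ledoux--Talagrand, Corollary~7.10), which gives sup-norm convergence at once. The claims about $v_n$ and $d(x_n,\mathcal{K}^\star)$ are taken from Rubinstein--Shapiro (Sec.~2.6, Lemma~A1).

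You instead reduce everything to the scalar SLLN, using the expected local modulus of continuity and a finite cover of the ball. This is essentially the proof of the Banach-space SLLN specialised to $C(B_C)$. You then prove argmin consistency by a direct compactness argument.

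The citation route is shorter. However, it silently requires that $\zeta\mapsto h(\cdot,\zeta)$ be a Borel (Bochner-measurable) $C(B_C)$-valued random variable, and it outsources the approximation step. Your route is self-contained, needs only the real SLLN and dominated convergence, and makes the measurability issues explicit through rational points.

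Two small points should be stated explicitly:
\begin{itemize}
\item Take $C\in\N$ (as well as $\varepsilon\in\mathbb{Q}$), so that a single null set serves all radii simultaneously. This is what ``locally uniformly a.s.'' requires.
\item In Step~4, note that $\mathcal{K}^\star\neq\emptyset$ by continuity of $f$ on the compact set $K$ (assuming $K\neq\emptyset$). This is needed so that $d(\cdot,\mathcal{K}^\star)$ is finite and continuous.
\end{itemize}
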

\begin{proof}
This is a consequence of \cite[Corollary 7.10]{ledoux1991m} and \cite[Sec. 2.6, Lemma A1 \& the discussion of Theorem A1]{rubinstein1993discrete}.
\end{proof}

\begin{proof}[Proof of Theorem \ref{thm:MC convergence Yt}.]
First, we note that the O-NJ-ODE output $Y^\theta_{t,\xi}$ with $\theta\in\Theta_m$ is defined by the integration over neural networks with at least one bounded activation function, as assumed in the theorem. Consequently, $Y^\theta_{t,\xi}$ itself is bounded in terms of the bound of these activations, the time period $T$, and some constant depending on the complexity $m$ and the architecture of the neural networks. Moreover, the activations are assumed to be Lipschitz continuous and each linear layer corresponds to a map with operator norm bounded by a constant depending on $m$, followed by a Lipschitz activation. Composing such layers leads to at most linear growth with respect to the bound of the bounded activation. In particular, we have $|Y^\theta_{t,\xi}(\zeta_l)|_2 \leq \tilde B(m,T)$ for all $t \in [0,T]$, $\xi \in \Xi$, and $\theta \in \Theta_m$ for some constant $\tilde B(m,T)$, which also depends on the maximum bound among the bounded activation functions. Hence,
\begin{align*}
\phi &\left( X_{t_i,\xi_j^k}(\zeta_l), Y^\theta_{t_i,\xi_j^k}(\zeta_l), Y^\theta_{t_i-,\xi_j^k}(\zeta_l), M_j^i(\zeta_l) \right) \\
	&= \left\lvert M_j^i(\zeta_l) \odot (X_{t_i,\xi_j^i}(\zeta_l) - Y^\theta_{t_i,\xi_j^i}(\zeta_l) ) \right\rvert_2^2 + \left\lvert M_j^i(\zeta_l) \odot ( X_{t_i,\xi_j^i}(\zeta_l) - Y^\theta_{t_i-, \xi_j^i}(\zeta_l) ) \right\rvert_2^2 \\
    &\le \left( \lvert X_{t_i,\xi_j^i}(\zeta_l)\rvert_2 + \lvert Y^\theta_{t_i,\xi_j^i}(\zeta_l) \rvert_2 \right)^2 + \left( \lvert X_{t_i,\xi_j^i}(\zeta_l)\rvert_2 + \lvert Y^\theta_{t_i-,\xi_j^i}(\zeta_l) \rvert_2 \right)^2 \\
    &\le 2\left( 2\lvert X_{t_i,\xi_j^i}(\zeta_l)\rvert_2^2 + \lvert Y^\theta_{t_i,\xi_j^i}(\zeta_l) \rvert_2^2 + \lvert Y^\theta_{t_i-,\xi_j^i}(\zeta_l) \rvert_2^2 \right) \\
	&\leq 4\lvert X_{t_i,\xi_j^i}(\zeta_l)\rvert_2^2 + 4B(m,T)^2, 
\end{align*}
which validates the condition of \Cref{lemma:convergencelocallyuniform},
\begin{equation}
\label{equ:dominating bound loss function}
\E_{\P}\left[\sup_{\theta \in \Theta_m} h(\theta, \zeta_l)\right] 
\leq   \E_{\P}\left[\frac{1}{n(\zeta_l)}\sum_{i=0}^{n(\zeta_l)-1} \frac{1}{J_i(\zeta_l)}\sum_{j=1}^{J_i(\zeta_l)} 4\lvert X_{t_i,\xi_j^i}(\zeta_l)\rvert_2^2 + 4B(m,T)^2 \right]  < \infty,
\end{equation}
by Assumption~\ref{assumption:5}. In the case that the readout network is bounded, the result is trivial.
This, together with the continuity of $\theta \mapsto h(\theta, \zeta_1)$ from Lemma~\ref{lem:properties for MC conv thm}, implies that Lemma~\ref{lemma:convergencelocallyuniform} can be applied to yield almost surely for $N \to \infty$ that the function 
\begin{equation}\label{equ:unif conv 1}
\theta \mapsto \frac{1}{N} \sum_{l=1}^{N} h(\theta, \zeta_l) = \hat\Phi_N(\theta)
\end{equation}
converges uniformly on $\Theta_m$ to 
\begin{equation}\label{equ:unif conv 2}
\theta \mapsto \E_{\P}[h(\theta, \zeta_1)] = \Phi(\theta).
\end{equation} 

Moreover, Lemma~\ref{lemma:convergencelocallyuniform} yields that $d(\theta^{\min}_{m,N},\Theta^{\min}_m)\to 0$ almost surely when $N\to \infty$. 
Then there exists a sequence $(\hat\theta^{\min}_{m,N})_{N \in \N}$ in $\Theta_m^{\min}$ such that $\lvert \theta^{\min}_{m,N} - \hat\theta^{\min}_{m,N} \rvert_2 \to 0$ almost surely for $N \to \infty$. Treating these as fixed parameterizations obtained after training, the uniform continuity of the random functions $\theta \mapsto Y_{t,\xi}^{\theta}$ on ${\Theta}_m$ implies that for any fixed  deterministic and bounded evaluation or test set $\zeta_0$ (in the same space as the $\zeta_l$),
$$\lvert Y_{t,\xi}^{\theta^{\min}_{m,N}}(\zeta_0) - Y_{t,\xi}^{\hat\theta^{\min}_{m,N}}(\zeta_0) \rvert_2 \to 0 \text{ a.s. for all  } t \in [0,T] \text{ and } \xi\in\Xi\text{ as } N \to \infty.$$ 
By continuity of $\phi$ this yields $\lvert h(\theta^{\min}_{m,N}, \zeta_0) - h(\hat\theta^{\min}_{m,N}, \zeta_0) \rvert \to 0$ a.s.\ as $N \to \infty$.
Let $\zeta_0$ now be a random variable which is independent of and identically distributed as the $\zeta_l$ defined on a copy $(\Omega_0, \mathbb{F}_0, \mathcal{F}_0, \P_0)$ of the filtered probability space $(\Omega, \mathbb{F}, \mathcal{F}, \P)$. Then the above statements hold for each realized evaluation set $\zeta_0(\omega_0)$ for $\P_0$-a.e.\ fixed $\omega_0$.
Now, considering the weights to be learned from the training set $\zeta_l(\omega)$, we have for $\P$-a.e.\ fixed $\omega \in \Omega$, that $\lvert h(\theta^{\min}_{m,N}\omb, \zeta_0) - h(\hat\theta^{\min}_{m,N}\omb, \zeta_0) \rvert \to 0$ $\P_0$-a.s.\ as $N \to \infty$.
With \eqref{equ:dominating bound loss function} we can apply dominated convergence which yields
\begin{equation*}
\lim_{N \to \infty} \E_{\P_0}\left[ \lvert h(\theta^{\min}_{m,N}\omb, \zeta_0) - h(\hat\theta^{\min}_{m,N}, \zeta_0) \rvert \right] = 0 \text{ for $\P$-a.e. } \omega \in \Omega.
\end{equation*}
Since for every integrable random variable $Z$ we have $0 \leq \lvert \E[Z] \rvert \leq \E[\lvert Z \rvert] $ and since $\hat\theta^{\min}_{m,N}\in \Theta_m^{\min}$ we can deduce that for $\P$-a.e.\ fixed $\omega \in \Omega$,
\begin{equation}
\label{equ: MC convergence}
\lim_{N \to \infty} \Phi(\theta^{\min}_{m,N}\omb) = \lim_{N \to \infty} \E_{\P_0}\left[  h(\theta^{\min}_{m,N}\omb, \zeta_0) \right] = \lim_{N \to \infty} \E_{\P_0}\left[  h(\hat\theta^{\min}_{m,N}\omb, \zeta_0) \right] = \Phi(\theta^{\min}_m).
\end{equation}
By the triangle inequality, for $\P$-a.e.\ fixed $\omega$, we have for $\hat\Phi_{\tilde N}$ and $\Phi$ evaluated on $\tilde N$ test samples $\zeta_0$ on $\Omega_0$, i.i.d. of the training samples $\zeta_l(\omega)$ yielding $\theta^{\min}_{m,N}\omb$ for $\omega\in\Omega$, that
\begin{equation}\label{equ: MC convergence 2}
\lvert \hat\Phi_{\tilde N}(\theta^{\min}_{m, N}\omb) -  \Phi(\theta^{\min}_{m}) \rvert \leq \lvert \hat\Phi_{\tilde N}(\theta^{\min}_{m, N}\omb) -  \Phi(\theta^{\min}_{m, N}\omb) \rvert + \lvert \Phi(\theta^{\min}_{m, N}\omb) -  \Phi(\theta^{\min}_{m}) \rvert.
\end{equation}
\eqref{equ:unif conv 1} and \eqref{equ:unif conv 2} imply that the first term on the right hand side converges to 0 when test samples $\tilde N \to \infty$ and \eqref{equ: MC convergence} implies that the second term on the right hand side converges to 0 a.s.\ when training samples $ N \to \infty$.  
Moreover, the uniform convergence in \eqref{equ:unif conv 1} and \eqref{equ:unif conv 2} yields the same result when setting $\tilde N = N$. Furthermore, Lemma~\ref{lemma:convergencelocallyuniform} yields the same result for $\hat\Phi_N(\theta^{\min}_{m, N})\omb$, i.e., when $\hat\Phi_{N}$ and $\Phi$ are defined through the $\zeta_l$ on the probability space corresponding to the training data. This finishes the proof of the first part of the Theorem.

To prove the convergence in $d_k$, we define $N_0 := 0$ and for every $m \in \N$
\begin{equation*}
N_m\omb := \min\left\{ N \in \N \; \vert \; N > N_{m-1}\omb, \lvert \Phi(\theta^{\min}_{m,N}\omb) - \Phi(\theta^{\min}_{m}) \rvert  \leq \tfrac{1}{m} \right\},
\end{equation*}
which is possible due to \eqref{equ: MC convergence} for $\P$-a.e.\ $\omega \in \Omega$. Then Theorem \ref{thm:1} implies that for $\P$-a.e.\ $\omega \in \Omega$
\begin{equation*}
\lvert \Phi(\theta^{\min}_{m,N_m\omb}\omb) - \Psi(\hat{X}) \rvert  \leq \tfrac{1}{m} +  \lvert \Phi(\theta^{\min}_{m}) - \Psi(\hat{X}) \rvert \xrightarrow{m \to \infty} 0.
\end{equation*}
Applying the same arguments as in deriving (\ref{eq: convergence in metric}) of Theorem~\ref{thm:1} yields the final result that
\begin{equation*}
d_k \left(  \hat{X} , Y^{\theta^{\min}_{m,N_m\omb}\omb}  \right)
\leq   \frac{\sqrt{2} c_0(k) \, c_1 }{c_2 c_3(k)} \, \left( \Phi(\theta^{\min}_{m,N_m\omb}\omb) - \Psi(\hat{X}) \right)^{1/2}  \xrightarrow{m \to \infty} 0,
\end{equation*}
for every $0 \leq k \leq K$ and for $\P$-a.e.\ $\omega \in \Omega$.
\end{proof}

\begin{rem}
    Note that since a neural network consisting of Lipschitz activations is bounded linearly in its input, the assumption of bounded neural networks can be replaced by assuming instead that the sum of squared inputs of the O-NJ-ODE is integrable. Since the network's input at a given time consists of all previous observations, this can be succinctly captured by assuming that the sum of the squares of all observations is integrable.  
\end{rem}

Explicitly relating the convergence results from \Cref{thm:MC convergence Yt} to the conditional expectation also yields the following corollary.


\begin{cor}\label{cor:1}
In the setting of Theorem \ref{thm:MC convergence Yt}, it also holds that $P$-a.s,
\begin{equation*}
\Phi(\theta^{\min}_{m,N_m}) \xrightarrow{m \to \infty} \Psi(\hat{X}) \quad \text{and} \quad \hat\Phi_{\tilde{N}_m}(\theta^{\min}_{m,\tilde{N}_m}) \xrightarrow{m \to \infty} \Psi(\hat{X}),
\end{equation*}
where $(\tilde{N}_m)_{m \in \N}$ is a suitable increasing random sequence in $\N$.
\end{cor}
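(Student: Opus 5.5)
The first claim follows from combining \Cref{thm:MC convergence Yt} with \Cref{thm:1}. The second needs a diagonal choice of sample sizes, built from the uniform convergence part of \Cref{thm:MC convergence Yt}.

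\textbf{First convergence.} Take the random sequence $(N_m)_{m\in\N}$ constructed in the proof of \Cref{thm:MC convergence Yt}. By construction, for $\P$-a.e.\ $\omega$,
\[
\lvert \Phi(\theta^{\min}_{m,N_m}) - \Phi(\theta^{\min}_m)\rvert \le \tfrac1m .
\]
\Cref{thm:1} gives $\Phi(\theta^{\min}_m)\to \Psi(\hat X)$ as $m\to\infty$. The triangle inequality then yields
\[
\lvert \Phi(\theta^{\min}_{m,N_m})-\Psi(\hat X)\rvert \le \tfrac1m + \lvert \Phi(\theta^{\min}_m)-\Psi(\hat X)\rvert \to 0 \quad \P\text{-a.s.}
\]
This bound was in fact already derived inside that proof.

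\textbf{Second convergence.} Fix $\omega$ in the full-measure set on which, for every $m\in\N$ simultaneously, $\hat\Phi_N(\theta^{\min}_{m,N})\to\Phi(\theta^{\min}_m)$ as $N\to\infty$. This set has full measure because it is a countable intersection of almost sure events, one for each $m$. Set $\tilde N_0:=0$ and define recursively
\[
\tilde N_m := \min\bigl\{N>\tilde N_{m-1} : \lvert \hat\Phi_{N'}(\theta^{\min}_{m,N'})-\Phi(\theta^{\min}_m)\rvert\le \tfrac1m \text{ for all } N'\ge N\bigr\}.
\]
It would suffice to control only $N$ itself, but requiring the bound for all $N'\ge N$ keeps the choice robust. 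The minimum exists by the almost sure convergence, and $\tilde N_m$ is measurable in $\omega$ as a first-entrance index. The triangle inequality and \Cref{thm:1} then give
\[
\lvert\hat\Phi_{\tilde N_m}(\theta^{\min}_{m,\tilde N_m})-\Psi(\hat X)\rvert\le \tfrac1m+\lvert\Phi(\theta^{\min}_m)-\Psi(\hat X)\rvert\to0 .
\]

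\textbf{Main subtlety.} The only real concern is that the null sets are handled uniformly in $m$, so that a single realization admits the whole diagonal sequence. This is settled by the countable intersection above. One could also take $\tilde N_m$ as the maximum of $N_m$ and the index chosen here. This gives one common sequence for both statements, but the corollary only asks for a suitable sequence.
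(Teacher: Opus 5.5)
Your proof is correct and follows the paper's argument. The first limit is the same bound obtained at the end of the proof of \Cref{thm:MC convergence Yt}, and the second uses the same diagonal construction of $\tilde N_m$ combined with the triangle inequality and \Cref{thm:1}. The only differences are minor: the paper requires the $1/m$ bound only at $N$ itself rather than for all $N'\ge N$, and it leaves the countable intersection of null sets over $m$ implicit.
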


\begin{proof}
The first convergence result is already shown in the final arguments proving Theorem \ref{thm:MC convergence Yt}. The second can be derived similarly, defining $\tilde{N}_0 := 0$ and $\tilde{N}_m$ for every $m \in \N$ by
\begin{equation*}
\tilde{N}_m := \min\left\{ N \in \N \; \vert \; N > \tilde{N}_{m-1}, \lvert \hat\Phi_N(\theta^{\min}_{m,N}) - \Phi(\theta^{\min}_{m}) \rvert  \leq \tfrac{1}{m}  \right\}.
\end{equation*}
After following the same arguments, this yields the result due to \eqref{equ: MC convergence 2}.
\end{proof}

\begin{rem}
    The assumption of \Cref{thm:1,thm:MC convergence Yt} to find elements in the sets of minimizers, $\Theta^{\min}_{m}$ and $\Theta^{\min}_{m,N}$, can be weakened. In particular, we can equivalently assume to have nearly-optimal elements $\theta^{\star}_m \in \{ \theta \in \Theta_m \, | \, \Phi(\theta) \leq \min_{\theta \in \Theta_m} \Phi(\theta) + \frac{1}{m} \}$ (and similarly for $\Theta^{\min}_{m,N}$) to retain the same results. 
\end{rem}



\section{Experiments}\label{sec:Experiments}
Having established convergence guarantees for the O-NJ-ODE, we now validate its performance on synthetic datasets. The exploration focuses on simple processes that feature one-dimensional domains and outputs. Moreover, these processes are continuous, Markovian, and martingales for fixed points in the function domain. Although the model is, in theory, capable of predicting complex dynamics such as path-dependence and discontinuities, the focus on simple dynamics isolates the model's core contribution, namely to generalize predictions across the spatial domain. This controlled setting enables direct analysis of the operator framework's ability to interpolate and extrapolate in space. A wide range of datasets exhibiting different forms of space-time coupling and spatial structures were generated for testing; refer to \url{https://github.com/OliLot/ONJODE} for the codebase. Here, we focus on the Brownian cosine process $X(t,\xi)=W_t\cos(\xi)$ and simulate paths on a discretized grid. It serves as a representative dataset for this investigation, featuring multiplicative coupling and a consistent spatial structure. Dataset configurations are shown in \Cref{tab:dataset_specs}.

\begin{table}[h]
\centering
\caption{Synthetic dataset specifications}
\label{tab:dataset_specs}
\begin{tabular}{@{}lc@{}}
\toprule
\textbf{Specification} & \textbf{BrownianCosine}  \\
\midrule
Process & $X(t,\xi) = W_t \cos(\xi)$ \\
\midrule
Number of paths ($N$) & 20,000  \\
Time points ($N_t)$ & 51 \\
Terminal time ($T$) & 1.0  \\
Observation probability in time ($p_t$) & 0.1 \\
Space points ($N_s$) & 51 \\
Space domain ($\Xi$) & $[-5, 5]$ \\
Observation probability in space ($p_s$) & 0.3 \\
\bottomrule
\end{tabular}
\end{table}

We use a temporal and spatial grid,  sampling for each time point from a $\text{Bernoulli}(p_t)$ distribution to determine whether it is an observation time. At each of these samples, spatial observation points are similarly drawn from the grid by sampling from the distribution $\text{Bernoulli}(p_s)$. Together, they define the observed coordinates. \Cref{fig:BrownianCosData} shows an example path of the simulated test set in its entirety, along with a scatter plot indicating the observed data points. The O-NJ-ODE model successively gets access to observations as it is evaluated through time.

\begin{figure}[h]

\begin{subfigure}{0.5\textwidth}
\includegraphics[width=1\linewidth, height=6cm]{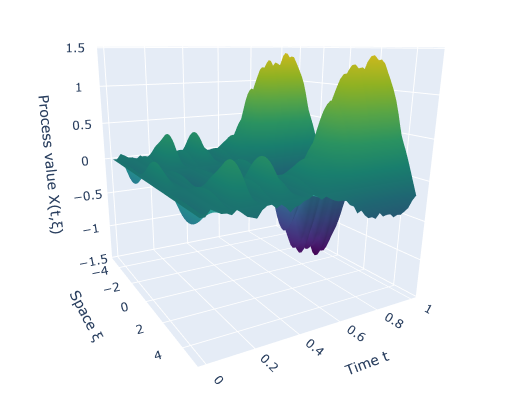} 
\caption{$X(t, \xi) = W_t\cos(\xi)$ path}
\label{fig:subim1}
\end{subfigure}
\hfill
\begin{subfigure}{0.5\textwidth}
\includegraphics[width=0.99\linewidth, height=5.5cm]{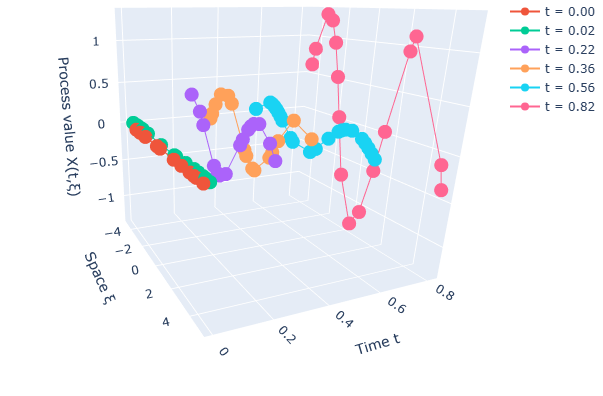}
\caption{Observed coordinates}
\label{fig:subim2}
\end{subfigure}

\caption{(a) Example path of the Brownian cosine process, from $t=0$ to terminal time $t=T$. (b) Discrete and irregularly sampled observations for this path. The time step is $dt=0.02$ and the space step $ds=0.2$}
\label{fig:BrownianCosData}
\end{figure}

Following the theoretical construction, our neural network architecture utilizes the locally aggregating generalized kernel (\ref{eq:gen kernel}). Additionally, the readout network consists of tanh activations for compatibility with \Cref{thm:MC convergence Yt}, while all other modules contain ReLU activations.  Each of the five modules is parameterized as two hidden layer neural networks (with linear output layers) for the sake of reducing complexity and removing architectural confounding factors. All modules except the generalized kernel networks have 50 nodes in each hidden layer, whereas these networks have 32 nodes owing to their smaller input dimensions. To stabilize training, inputs for the Neural ODE network are normalized with tanh scaling, which prevents exploding and vanishing gradients. Layer Normalization is also introduced in all but the readout network, which has strong evidence of stabilizing learning for sequential models \citep{ba2016layernorm}, as it conditions the optimization landscape and encourages gradient flow. The complete architecture along with training specifications are given in \Cref{app:implementation}; this defines our baseline model. We additionally conduct a sensitivity analysis examining the impact of activation function choice, sample size, and observation sparsity on model performance.

Model performance is assessed with a similar evaluation metric to \citep{herrera2021neural}, adjusted to account for the additional space dimension. It aggregates the mean squared errors between model predictions and the optimal true conditional expectations, which can be computed since the synthetic data generating process is known (however, not to the model, which only has access to observations). More precisely, the metric is evaluated on the test set and is given by
\begin{equation*}
    \text{MSE}(Y^{\theta},\hat{X}) := \frac{1}{N_{test}}\sum_{l=1}^{N_{test}}\frac{1}{N_t}\sum_{i=0}^{N_t-1}\frac{1}{N_s}\sum_{j=0}^{N_s-1}\left((Y^{\theta}_{k_i,x_j})^{(l)}-\hat{X}_{k_i,x_j}^{(l)}\right)^2,
\end{equation*}
where $k_i = \frac{iT}{N_t-1}$ and $x_j=\frac{j(R-L)}{N_s-1}+L$ for space domain $\Xi = \left[L, R\right]\subset \R$. Additionally, we compare the model's test loss with the optimal test loss, which is calculated by substituting the known conditional expectation into the empirical loss function (\ref{equ:appr loss function}) and evaluating it over the test set. In the limit, this is the minimal loss, up to indistinguishability with respect to the pseudo-metric $d_k$ (\ref{equ: pseudo metric}).

\subsection{Results and Discussion}
We find that the O-NJ-ODE has a strong predictive performance across both the space and time domains, as demonstrated by the example path in \Cref{fig:BrownianCosPath}. The model captures the spatio-temporal structure of the cosine-scaled Brownian motion. At observation times, we observe an update in predictions matching those of the true conditional expectation, indicating that the jump network adequately captures the conditional expectation as new information becomes available. Jump predictions are also propagated through time until the next observation, which matches the martingale characteristic of the process. This shows that the evolution network accurately captures the dynamics of the true conditional expectation between observations. Model predictions at both observed and unobserved times have a low magnitude of error from the true conditional expectation. However, since the loss function (\ref{equ:appr loss function}) does not capture the scale of these errors relative to the true conditional expectation values, training strictly prioritizes correcting larger errors over smaller ones. As a result, prediction errors are homogeneous over time regardless of the magnitudes of the conditional expectation. A scale adjusted loss function could be used if the relative accuracy of predictions is important. However, the analysis and empirical evaluation of this is left for future research.

\begin{figure}[tb]
\centering
\includegraphics[width=0.85\linewidth, height=21cm]{figures/BrownianCosPath3_1.pdf} 
\caption*{(a) Path over time starting from initialization at zero ($W_0=0$), including the first of the additional observation times in the third panel, at $t=0.04$.}
\label{fig:BrownianCosPath3,1}
\end{figure}

\begin{figure}[tb]
\centering
\includegraphics[width=0.85\linewidth, height=21cm]{figures/BrownianCosPath3_2.pdf} 
\caption*{(b) Continued until final observation time.}
\caption{Evolution of the $X(t,\xi)=W_t\cos(\xi)$ path of minimal MSE, showcasing the O-NJ-ODE prediction and target conditional expectation. Although not shown here, performance is maintained until terminal time $T$.}
\label{fig:BrownianCosPath}
\end{figure}

Training results indicate that the O-NJ-ODE converges efficiently and with good stability. As shown in \Cref{fig:BrownianCosEval(0.3)}, both train and test losses converge rapidly, with most learning taking place within the first 15 epochs, after which they oscillate around the optimal loss of $6.675\times10^{-2}$ (calculated using the true conditional expectation). Furthermore, the train and test losses track each other closely throughout, indicating no significant overfitting. After twenty epochs, the losses plateau and improvements are negligible. Ultimately, the test loss achieves a minimal value of $6.983\times10^{-2}$ at epoch 86, and the evaluation metric reaches its minimum of $8.45\times10^{-5}$ at epoch 81. The initial spike in the train loss likely reflects the model's adjustment from random initialization. Subsequent oscillations are small and remain around the optimal loss. Decreasing the learning rate or increasing batch-size could promote further stability. However, the evaluation metric indicates successful convergence and suggests that the oscillations do not impact model performance.

\begin{figure}[h]
    \centering
    \includegraphics[width=1\linewidth]{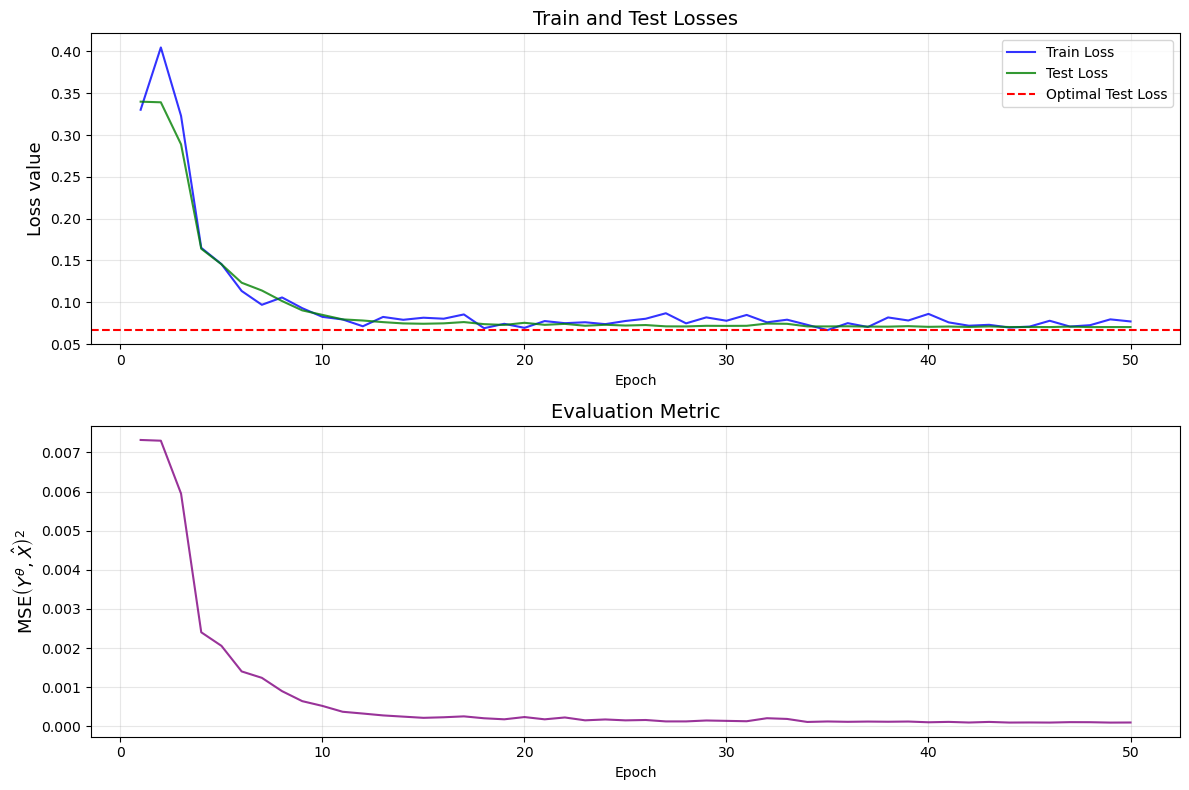}
    \caption{O-NJ-ODE training diagnostics for the BrownianCosine dataset.}
    \label{fig:BrownianCosEval(0.3)}
\end{figure}

Empirically, dismissing the use of bounded activation functions in the readout network or using alternatives such as LeakyReLU, GeLU, or only tanh in the other networks has a modest impact on absolute prediction accuracy. However, the inductive biases they introduce, such as eliciting smoothness (GeLU, tanh), piecewise linearity (ReLU, Leaky ReLU), or boundedness (tanh, sigmoid), affect characteristics of the model predictions. When replacing the tanh activations of the readout network with LeakyReLU, we observe a greater variance of the model predictions. \Cref{app:leakyReLU} shows the path of minimal evaluation error for this model, as well as the corresponding loss and MSE plots. The model achieves a minimal evaluation metric of $1.91\times10^{-4}$ with a minimal test loss of $7.310\times10^{-2}$. Compared to the original model, the MSE has doubled and the loss increased by about $4\%$. Although there is a significant relative degradation to the baseline model, this is low in absolute terms. Moreover, the model is trained at a slower rate and with decreased stability. The induced prediction variance adversely impacts the model's ability to estimate the smooth cosine. In practice, it is therefore beneficial to tailor the choice of activation functions to the predicted process. This helps the model to generalize effectively and obtain a more representative output.

Reducing the number of sample paths degrades training efficiency and performance relative to the baseline, yet maintains a reasonable quality in absolute terms. Using $N=5000$ paths yields a minimal MSE of $2.14\times10^{-4}$ at epoch 96 and a minimal test loss of $7.295\times10^{-2}$ at epoch 95. Compared to the baseline model, this is an approximately $2.5$-fold increase in MSE. Moreover, the test loss deviation from the optimal value doubles to around $9\%$. The corresponding loss and prediction plots can be seen in \Cref{app:reduced sample}. 

Relative performance degradation is more pronounced with sparse spatial observations. With a reduced spatial observation probability of $p_s=0.1$, the model obtains a minimal MSE of $9.15\times10^{-4}$ at epoch $92$ and minimal test loss of $8.102\times10^{-2}$ at epoch $100$ (\Cref{app:sparse samples}). While this performance is acceptable, it represents the greatest relative decrease from baseline; the test loss is more than $20\%$ from the optimum, and the MSE increases tenfold. These results demonstrate the sensitivity to data coverage and the amount of training samples, as implied by our theoretical results. Nevertheless, despite notable relative degradation, the model maintains reasonable performance under limited data availability. Rapidly converging losses also indicate stable learning, albeit with larger oscillations. \Cref{tab: eval comparison} summarizes results from the sensitivity analysis.



\begin{table}[h]
\centering
\begin{tabular}{|c|c|cccc|}
\hline
     & Optimal & Baseline & Leaky ReLU & Small Sample Size & Sparse Observations \\ \hline
    MSE & 0 &$8.45\times10^{-5}$ & $1.91\times10^{-4}$ & $2.14\times10^{-4}$ & $9.15\times10^{-4}$\\
    Test Loss & $6.675\times10^{-2}$ & $6.983\times10^{-2}$ & $7.310\times10^{-2}$ & $7.295\times10^{-2}$ & $8.102\times10^{-2}$\\
\hline
\end{tabular}
\caption{MSE and test losses of the baseline model (as specified in \Cref{tab:dataset_specs}) and its modifications: Leaky ReLU readout activations, reduced sample size ($N=5000$), and sparse spatial observations ($p_s=0.1$).}
\label{tab: eval comparison}
\end{table}

\section{Conclusion}

We elevate the Neural Jump ODE framework to the more general setting of optimally predicting function-valued stochastic processes taking values in $L^2(\Xi, \R^{d_X})$. This extension introduces a generalized kernel that processes information and captures spatial context. An RNN structure is leveraged to capture path dependence in place of the signature. Under the weaker assumption of measurability (as opposed to continuity) of the target functions, the O-NJ-ODE encapsulates previous NJ-ODE models as a special case and extends their theoretical results even in the finite-dimensional case. We introduce a mathematically rigorous problem formulation for the spatio-temporal setting and prove novel convergence guarantees to the $L^2$-optimal predictor. Synthetic experiments on a cosine-scaled Brownian motion validate these theoretical results, demonstrating strong generalization in both space and time, robustness to the choice of activation functions, and sustained performance with sparse observations or limited sample sizes. 

Our controlled experiments establish foundational feasibility, but further exploration of complex dynamics, such as path-dependent or discontinuous processes, regime shifts, and higher-dimensional settings, would further strengthen the empirical validation of the O-NJ-ODE capacity. It is worth noting that previous extensions of the NJ-ODE, such as dependent observation frameworks or input-output systems, are transferable to this setting. The O-NJ-ODE is broadly applicable to real world stochastic processes, ranging from yield curves and implied volatility surfaces to health monitoring such as EEG measurements, where irregularly sampled and discrete observations arise naturally in function-valued processes. Exploring such real settings would validate practical effectiveness and highlight performance under the complicated dynamics, regime shifting, and data scarcity common in these domains.



\if\addackn1
	\section*{Acknowledgement}
 
\fi

\bibliographystyle{iclr2021_conference}
\bibliography{references.bib}
\clearpage

\appendix
\section*{Appendix}

\section{Experimental Details}\label{sec:Experimental Details}
	
\subsection{Training and Baseline Model Specifications}\label{app:implementation}

\begin{table}[h]
\centering
\caption{Training specifications for O-NJ-ODE models}
\label{tab:training_specs}
\begin{tabular}{@{}lc@{}}
\toprule
\textbf{Parameter} & \textbf{Value} \\
\midrule
\multicolumn{2}{@{}l}{\textit{Training Configuration}} \\
Epochs & 100 \\
Batch size & 200 \\
Train/test split & 80/20 \\
Learning rate & 0.001 \\
Optimizer & Adam \\
Scheduler & Reduce on Plateau \\ 
ODE solver & Euler \\
\midrule
\multicolumn{2}{@{}l}{\textit{Network Architecture}} \\
NODE network ($f_{\theta_1}$) & $(50, \text{ReLU}) \times 2$ \\
Encoder network ($\rho_{\theta_2}$) & $(50, \text{ReLU}) \times 2$ \\
Generalized kernel networks ($\phi^1_{\theta_3}, \phi^2_{\theta_3}$) & $(32, \text{ReLU}) \times 2$ \\
Readout network ($g_{\theta_4}$) & $(50, \tanh) \times 2$ \\
Hidden state size & 100 \\
Kernel context size & 100 \\
\midrule
\multicolumn{2}{@{}l}{\textit{Regularization \& Preprocessing}} \\
NODE input scaling function & $\tanh$ \\
layer normalization & Enabled \\
\bottomrule
\end{tabular}
\end{table}

\clearpage

\section{Additional Model Implementations}\label{app:BrownianCosPredictions}
\subsection{LeakyReLU Readout Activations}\label{app:leakyReLU}

\begin{figure}[h]
    \centering
    \includegraphics[width=1\linewidth]{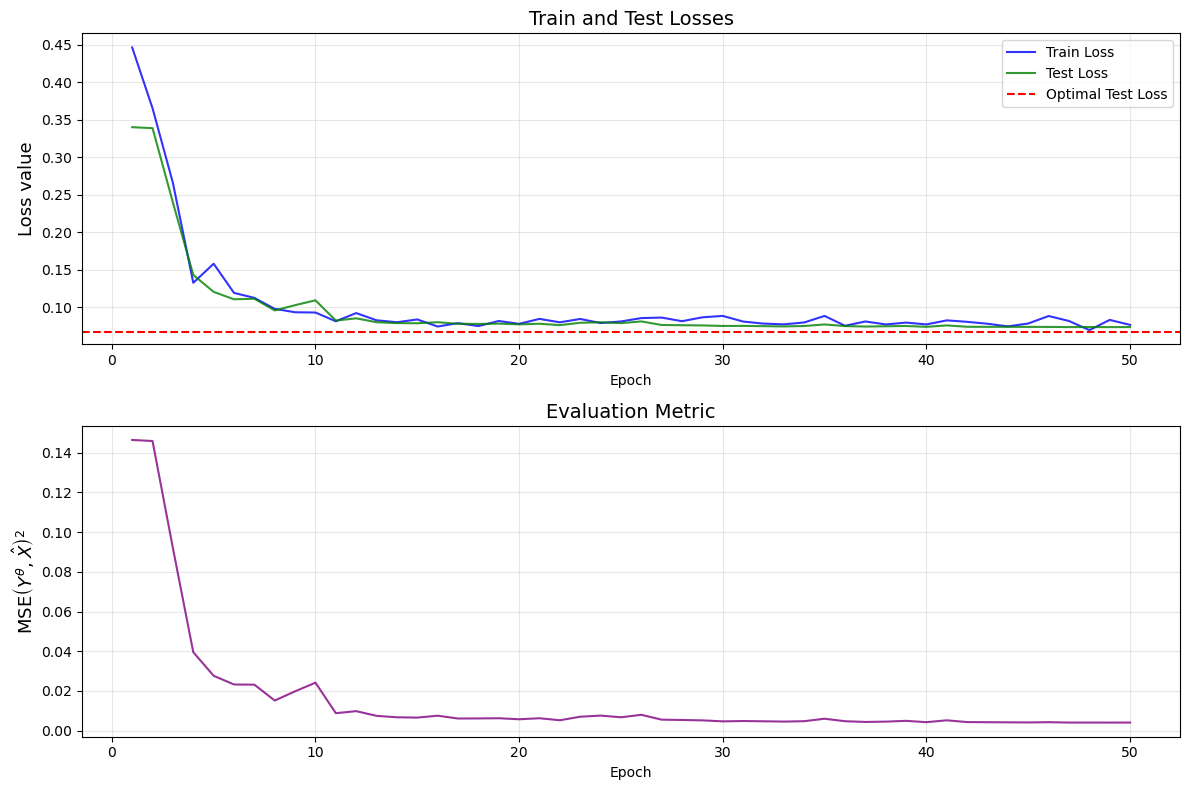}
    \caption{Leaky ReLU activations in the readout result in marginally slower and less stable training, with delayed plateauing of both the validation loss and evaluation metric at larger values.}
\end{figure}

\begin{figure}[h]
\centering
\includegraphics[width=0.9\linewidth, height=22cm]{figures/BrownianCosLReLUPath.pdf} 
\caption{Replacing tanh activation functions in the readout results in increased roughness and prediction variance, but the model maintains its strong predictive ability.}
\label{fig:BrownianCosPath3,1 LeakyRelu}
\end{figure}

\clearpage

\subsection{Reduced Path Sample Size - $N=5000$}\label{app:reduced sample}

\begin{figure}[h]
    \centering
    \includegraphics[width=1\linewidth]{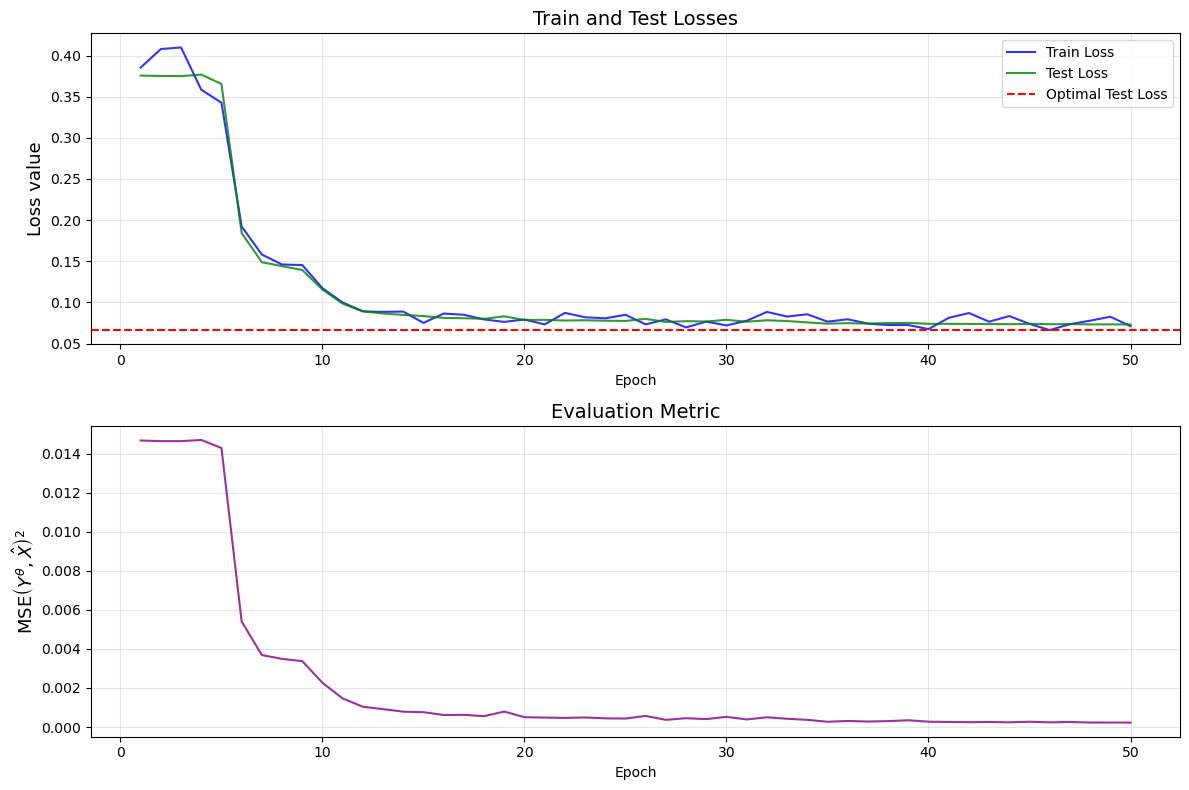}
    \caption{Reducing the sample size only marginally degrades training and performance in absolute terms.}
\end{figure}

\begin{figure}[h]
\centering
\includegraphics[width=0.9\linewidth, height=22cm]{figures/5000pathsplot.pdf} 
\caption{Despite the reduction in sample size to a quarter of the baseline, the decrease in prediction accuracy is negligible.}
\label{fig:BrownianCosPath3,1 reduced train size}
\end{figure}

\clearpage

\subsection{Sparse Spatial Observations - $p_s=0.1$}\label{app:sparse samples}

\begin{figure}[h]
    \centering
    \includegraphics[width=1\linewidth]{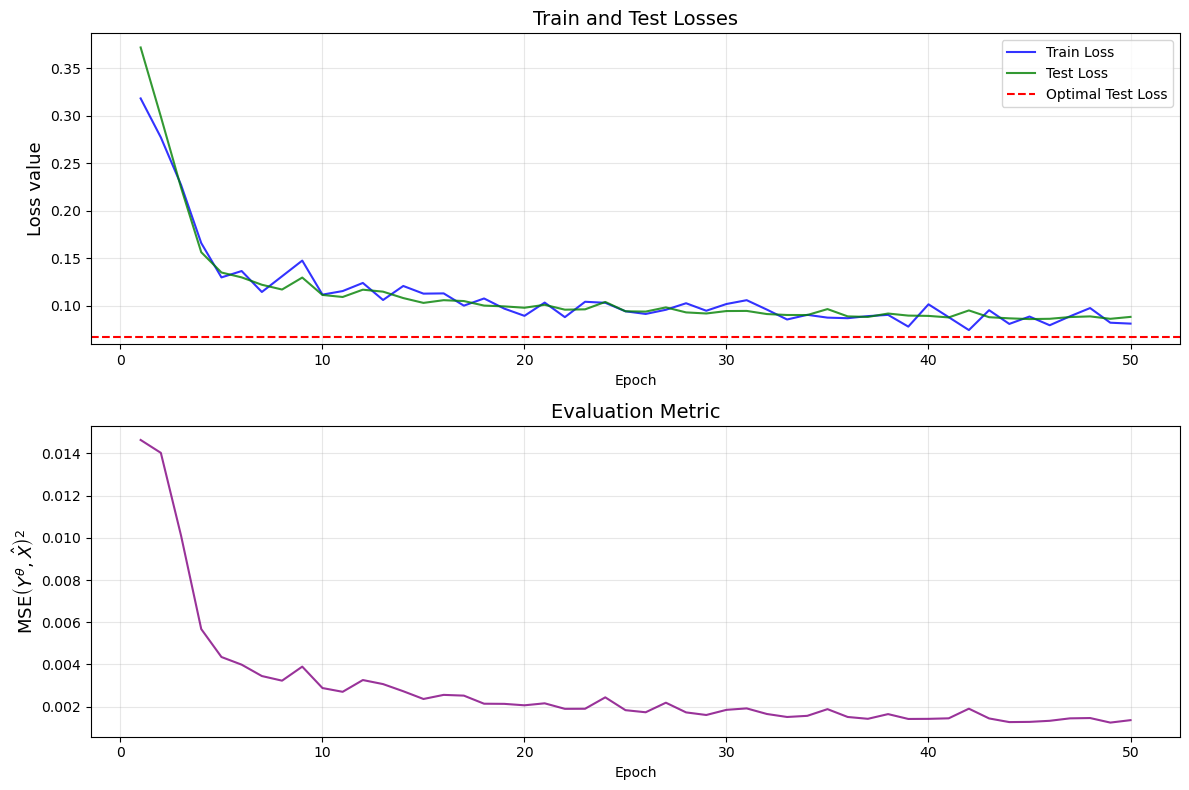}
    \caption{Although small in absolute terms, increasing spatial data sparsity has the greatest impact on both training speed and stability, as well as its predictive performance.}
\end{figure}

\begin{figure}[h]
\centering
\includegraphics[width=0.9\linewidth, height=22cm]{figures/J_0.1_path.pdf} 
\caption{Prediction performance is noticeably weaker with the decreased access to spatial information, but the model still generalizes well.}
\label{fig:BrownianCosPath3,1 scarse spatial observations}
\end{figure}

\if\inclapp1



\fi


\end{document}